\documentclass[11pt]{article}
	
	\newcommand{\blind}{0}
	
	\addtolength{\oddsidemargin}{-.5in}%
	\addtolength{\evensidemargin}{-.5in}%
	\addtolength{\textwidth}{1in}%
	\addtolength{\textheight}{1.3in}%
	\addtolength{\topmargin}{-.8in}%
    \makeatletter
    \renewcommand\section{\@startsection {section}{1}{\z@}%
                                       {-3.5ex \@plus -1ex \@minus -.2ex}%
                                       {2.3ex \@plus.2ex}%
                                       {\normalfont\fontfamily{phv}\fontsize{16}{19}\bfseries}}
    \renewcommand\subsection{\@startsection{subsection}{2}{\z@}%
                                         {-3.25ex\@plus -1ex \@minus -.2ex}%
                                         {1.5ex \@plus .2ex}%
                                         {\normalfont\fontfamily{phv}\fontsize{14}{17}\bfseries}}
    \renewcommand\subsubsection{\@startsection{subsubsection}{3}{\z@}%
                                        {-3.25ex\@plus -1ex \@minus -.2ex}%
                                         {1.5ex \@plus .2ex}%
                                         {\normalfont\normalsize\fontfamily{phv}\fontsize{14}{17}\selectfont}}
    \makeatother
	
	\usepackage{amsmath}
	\usepackage{graphicx}
	\usepackage{enumerate}
	\usepackage{natbib} 
	\usepackage{url} 
		\usepackage{epstopdf}
\usepackage[caption=false]{subfig}
\usepackage[compact]{titlesec}
\usepackage{natbib}
\usepackage{siunitx}
\usepackage{tabularx} 
\usepackage{amsmath}
\usepackage{bm}
\usepackage{amsthm}
\newtheorem{theorem}{Theorem}

\newtheorem{proposition}[theorem]{Proposition}

\newtheorem{definition}[theorem]{Definition}


\usepackage{amsfonts}
\usepackage{amssymb}
\usepackage{mathtools}
\usepackage{bbm}
\usepackage{xcolor,soul,framed} 
\soulregister\cite7
\soulregister\citep7 
\soulregister\citet7
\soulregister\ref7 
\soulregister\eqref7 
\soulregister\pageref7 

\colorlet{shadecolor}{yellow}
\usepackage{graphicx} 
\usepackage{booktabs}


\usepackage{hyperref} 
\hypersetup{
	colorlinks=true,       
	linkcolor=red,        
	citecolor=blue,        
	filecolor=magenta,     
	urlcolor=black
}
\usepackage{algorithm,algpseudocode}
\usepackage{eqparbox}
\usepackage{xcolor}
\usepackage{enumitem}   
\usepackage{multirow}
\usepackage{multicol}
	
\newcommand{\bmTheta}{\bm{\Theta}}
	
	\begin{document}
		
		\def\spacingset#1{\renewcommand{\baselinestretch}%
			{#1}\small\normalsize} \spacingset{1}
		
		\if0\blind
		{
			\title{\bf Self-scalable Tanh (Stan): Faster Convergence and Better Generalization in Physics-informed\\ Neural Networks}
			\author{Raghav~Gnanasambandam\textsuperscript{a}, Bo~Shen\textsuperscript{a}, Jihoon~Chung\textsuperscript{a}, Xubo~Yue\textsuperscript{b},  \\
			   and Zhenyu (James) Kong\textsuperscript{a*} 	\\
			\textsuperscript{a}Grado Department of Industrial and Systems Engineering, Virginia Tech, Blacksburg, US \\
           	\textsuperscript{b}Industrial and Operations Engineering, University of Michigan, Ann Arbor, US \hfill \hfill\\
           		\textsuperscript{*}Corresponding Author \hfill \hfill} 
           	
			\date{}
			\maketitle
				\vspace{-0.5cm}
		} \fi
		
		\if1\blind
		{

            \title{\bf \emph{IISE Transactions} \LaTeX \ Template}
			\author{Author information is purposely removed for double-blind review}
			
\bigskip
			\bigskip
			\bigskip
			\begin{center}
				{\LARGE\bf \emph{IISE Transactions} \LaTeX \ Template}
			\end{center}
			\medskip
		} \fi
		\bigskip
		
	\begin{abstract}
Physics-informed Neural Networks (PINNs) are gaining attention in the engineering and scientific literature for solving a range of differential equations with applications in weather modeling, healthcare, manufacturing, etc. Poor scalability is one of the barriers to utilizing PINNs for many real-world problems. To address this, a Self-scalable tanh (Stan) activation function is proposed for the PINNs. The proposed Stan function is smooth, non-saturating, and has a trainable parameter. During training, it can allow easy flow of gradients to compute the required derivatives and also enable systematic scaling of the input-output mapping. It is shown theoretically that the PINNs with the proposed Stan function have no spurious stationary points when using gradient descent algorithms. The proposed Stan is tested on a number of numerical studies involving general regression problems. It is subsequently used for solving multiple forward problems, which involve second-order derivatives and multiple dimensions, and an inverse problem where the thermal diffusivity of a rod is predicted with heat conduction data.  These case studies establish empirically that the Stan activation function can achieve better training and more accurate predictions than the existing activation functions in the literature.            
	\end{abstract}
			
	\noindent%
	{\it Keywords:} Activation Function; Physics-informed Neural Networks;  Differential Equations; Spurious Stationary Points; Inverse problem.

	\spacingset{1.5} 

\section{Introduction} \label{sec: intro}
Deep learning has revolutionized many facets of our modern society such as self-driving vehicles, recommendation systems, and web search tools. The fundamental ingredients of deep learning were first introduced and developed by \cite{mcculloch1943logical} and \cite{rosenblatt1958perceptron} half a century ago. Since then, this work has spurred continued development and exploration of the neural network (NN). The triumph of the NN can be explained by the seminal universal approximation theorem \citep{hornik1989multilayer}: a properly constructed NN can act as a universal approximator for any continuous and bounded functions. As a result, the NN can be used for a wide variety of function approximations, including but not limited to image segmentation, machine translation, speech recognition, weather prediction, etc. 

Despite the successes, when analyzing complex engineering systems, NNs are almost purely data-driven. They inevitably ignore a vast amount of prior knowledge, such as the principled physical laws (typically in the form of differential equations). Such prior knowledge is useful in two aspects: (1) in the \textit{small data} regime, it encodes valuable information that amplifies the information content of the data~\citep{karniadakis2021physics} and (2) it acts as a regularization agent that eliminates inadmissible solutions of complex physical processes~\citep{nabian2020physics}.  Physics-informed Neural Network (PINN)~\citep{raissi2019physics} is proposed to encode any given laws of physics described by general  differential equations into the neural network.  Accordingly, PINN is an excellent candidate for solving numerical differential equations, integrations, and dynamical systems  in many engineering applications~\citep{lagaris1998artificial, rudd2013solving, rudd2015constrained, raissi2018hidden,  blechschmidt2021three}, which has the advantages of being mesh-free and computationally efficient compared to numerical methods like Finite Element Methods~\citep{donea2003finite}. Indicatively, PINN has found applications in many real-world problems \citep{mao2020physics, sahli2020physics, cai2021physics}. For instance, \cite{cai2021physics} employ PINNs to solve general heat transfer problems and power electronics problems of industrial complexity. \cite{sahli2020physics} proposed a PINN that incorporates wave propagation dynamics and successfully applied it to Cardiac Activation Mapping problems.


PINNs are structurally and functionally similar to  multi-layered fully-connected neural networks~\citep{karniadakis2021physics}. Like NNs, they are designed for the mapping between the input (which are usually the spatial and/or temporal location for PINNs) and the output (which is a function of the input) to learn the data-driven solutions of differential equations. The key difference is that the physical laws (i.e., differential equations) are embedded into the loss function together with the fitting loss of available data (for example, initial/boundary conditions or experimental data)~\citep{raissi2019physics, karniadakis2021physics}. 

However, the training of the PINN is often difficult and not understood well enough~\citep{wang2021understanding,wang2022and}. There are two main issues that create the difficulty: (1) the analysis by~\cite{wang2021understanding} shows that a primary failure mode for PINN is the vanishing gradients when using the gradient descent algorithms. The vanishing gradients are caused by the saturation, which is introduced in Section~\ref{subsec: Proposed Activation function}. (2) the unknown magnitude of the outputs makes the training unstable. Particularly, higher orders of magnitude of outputs arise in many engineering problems. For example, a rod of unit length can take temperature values from 0 to 2000 degrees Celsius in a heat transfer process. In a traditional supervised learning task with NN, the data can always be scaled (i.e., normalization) before training~\citep{Goodfellow-et-al-2016}. However, it is impossible for PINN to know the magnitude of outputs in advance with the given domain, differential equations, and initial/boundary conditions. The idea of normalizing the output is absurd for the PINN case and thus, PINN~\citep{raissi2019physics} more often fails to solve the problems where the outputs are of a higher order of magnitude than the inputs. 


To overcome these two issues, we look into the activation function as a  solution, which is the key component to introducing the non-linearity into NNs~\citep{haykin1994neural} and is essential to the training of NNs~\citep{hayou2019impact}. When embedding physical laws into loss function, finding the derivatives of the output with respect to input is a crucial step in finding the loss function value itself~\citep{raissi2019physics}. This makes the activation function play a much more significant role in gradient flow in PINN  than the usual NNs~\citep{wang2021understanding}. There are several popular activation functions, such as tanh, ReLU, sigmoid, etc~\citep{ranjan2020understanding}. In PINN, it is preferred to use smooth functions~\citep{cai2021physics,zhu2021machine,zobeiry2021physics} over the widely used ReLU-like non-smooth activation functions~\citep{mishra2020estimates}, where tanh has been shown markedly successful~\citep{jagtap2020conservative, jagtap2020extended, singh2019pi, zhang2021physics}. However, tanh based activation functions still suffer from  the above-mentioned issues. To improve from tanh,  a new activation function, namely, Self-scalable tanh (Stan), is proposed for PINN in this work. The Stan function has a tanh term and an additional self-scaling term similar to Swish function~\citep{ramachandran2017searching}, along with a trainable parameter. To summarize, the contributions of this paper are as follows: 
\begin{itemize}
\item Propose the Self-scalable tanh (Stan) activation function, which enables learning outputs (solution) with values of higher order magnitude.
\item Analyze the theoretical convergence of PINN with Stan (i.e., proposed method) using gradient descent algorithms, where our proposed method is not attracted to the spurious stationary points\footnote{A stationary point is spurious if it is not a global minimum~\citep{no2021wgan}.}.
\item In both forward problems (the approximate solutions are obtained) and inverse problems (coefficients involved in the governing equation are identified), our proposed method shows faster training convergence and better test generalization than state-of-the-art activation functions.
\end{itemize}

\subsection{Related Work in Activation Functions}
The activation function and its gradient play an important role in the training process influencing the gradients of the loss function for optimization of parameters~\citep{hayou2019impact}. A badly designed activation function causes the loss of information of the input during forward propagation and the exponential vanishing/exploding of gradients during back-propagation, leading to poor convergence. Designing  activation functions for NNs has been an active research area~\citep{ranjan2020understanding}. Since no single activation function works well in all situations, it is desirable to design a data-driven or adaptive activation function that learns the best possible activation function given a specific dataset \citep{jagtap2020adaptive}. For instance, \cite{ramachandran2017searching} use a reinforcement learning-based approach to search for the best combination of activation functions. Based on their experimental results, they propose an activation function Swish that achieves state-of-the-art performance across numerous datasets. \cite{misra2019mish} then propose Mish: a self-gating version of Swish and further show that Mish yields promising results in many computer vision tasks. In the literature of PINN, \cite{jagtap2020adaptive, jagtap2020locally} developed an adaptive activation function framework that introduces multiple parameters to dynamically change the topology of the loss function during the optimization process. They have shown that this approach can significantly accelerate the training process, improve convergence rate, and yield more accurate solutions than the traditional activation functions. 

\textbf{Paper Organization:} The remainder of this paper is organized as follows. The proposed methodology with a Self-scalable tanh activation function is introduced in Section~\ref{sec: proposed activation}, where  the motivation and properties of the proposed activation function are discussed. Section~\ref{sec: numerical study} provides the results on neural network approximations of smooth and discontinuous functions on a typically high magnitude of output, followed by PINN-based forward and inverse problems in Section~\ref{sec: PINN study} for testing and validation of the proposed activation function.  Finally, the conclusions and future work are discussed in Section~\ref{sec: conclusion}.

\section{Proposed Method} \label{sec: proposed activation}
Consider the general form of partial differential equations (PDE) written as
\begin{align*} \label{PDE general form}
\begin{split}
    L[u(\bm{x})] &= f(\bm{x}), \,\, \bm{x} \in \Omega, \\
    u(\bm{x}) &= g(\bm{x}), \,\, \bm{x} \in \partial \Omega,     
\end{split}    
\end{align*}
where $L$ represent the linear/nonlinear differential operator, $f$ and $g$ are known functions, and $u$ is the unknown solution to be learned with initial/boundary condition $u(\bm{x}) = g(\bm{x}), \ \bm{x} \in\partial \Omega$. To find the solution $u$, the framework of the Physics-informed Neural Network (PINN) will be first introduced in Section~\ref{subsec: PINN}. The proposed activation function will be subsequently explained in Section~\ref{subsec: Proposed Activation function}.

\subsection{Physics-informed Neural Networks} \label{subsec: PINN}
We consider a NN of depth $D$ corresponding to a network with an input layer, $D-1$ hidden layers, and an output layer. In the $k$th hidden layer, $N_k$  neurons are present. Each hidden layer of the network receives an output $\bm{z}_{k-1} \in \mathbb{R}^{N_{k-1}}$ from the previous layer where an affine transformation of the form
\begin{equation} \label{eq: one layer}
    \mathcal{L}_k(\bm{z}_{k-1})\coloneqq \bm{W}^k\bm{z}_{k-1} +\bm{b}^k
\end{equation}
is performed. The network weights $\bm{W}^k \in \mathbb{R}^{N_k \times N_{k-1}}$ and bias term $\bm{b}^k \in \mathbb{R}^{N_k}$ associated with the $k$th layer are initialized  with independent and identically distributed (iid) samples. The nonlinear activation function $\sigma(\cdot)$ is applied to each component of the transformed vector before sending it as an input to the next layer. Based on the definition in Eq.~\eqref{eq: one layer}, the final neural network representation is given by the composition
\begin{equation} \label{eq: nn structure}
    u_{\bmTheta}(\bm{z}) = (\mathcal{L}_D\circ \sigma \circ \mathcal{L}_{D-1}\circ \dots \sigma \circ \mathcal{L}_1)(\bm{z}),
\end{equation}
where the operator ``$\circ$'' is the composition operator,  $\bmTheta=\{\bm{W}^k,\bm{b}^k\}_{k=1}^D$ represents the trainable parameters in the network, $u$ is the output and $\bm{z}^0=\bm{z}$ is the input.

\begin{figure}[!ht]\vspace{-0.0cm}
	\centering
	\includegraphics[width=\textwidth]{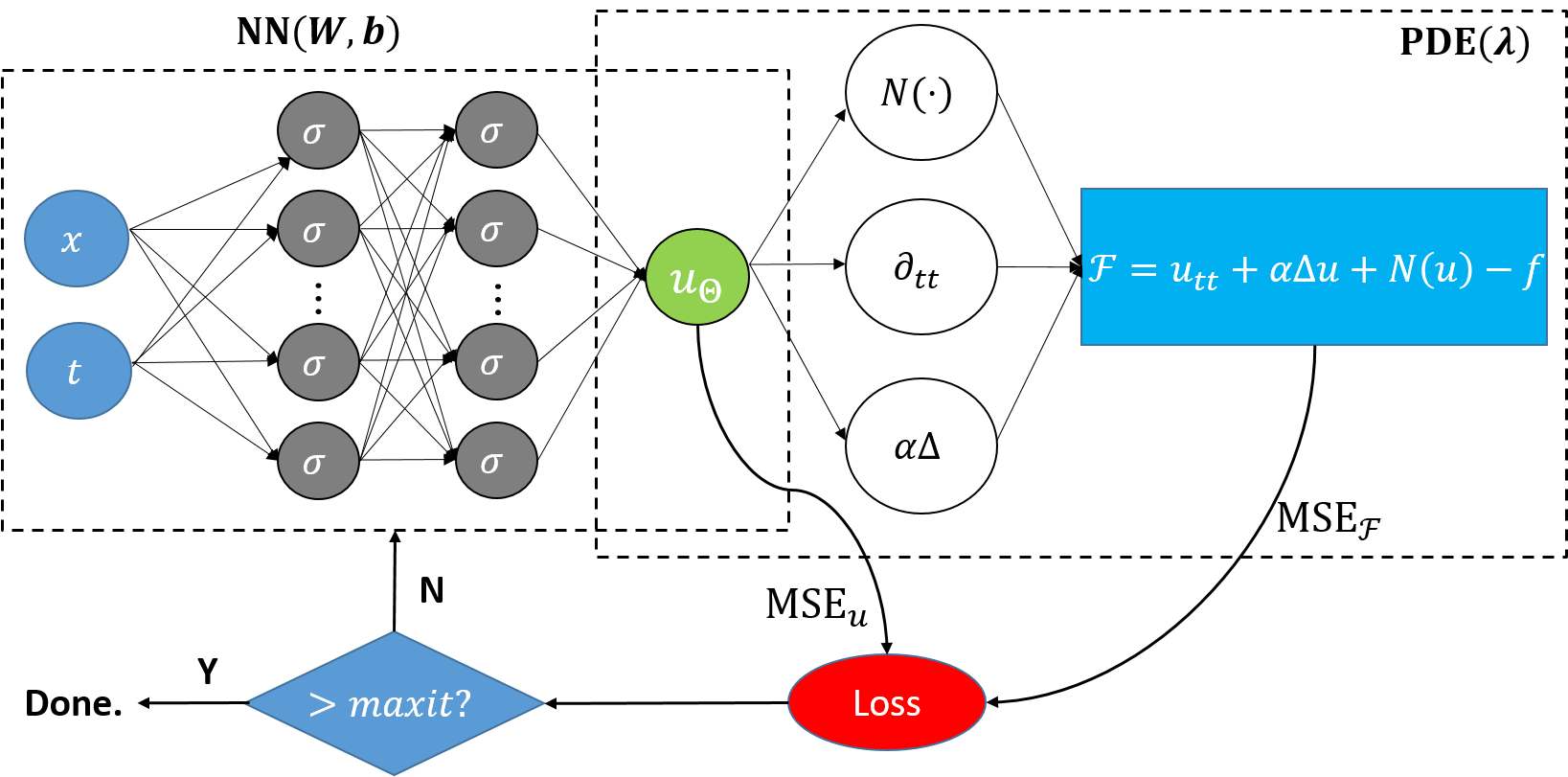}\vspace{-0.9cm}
	\caption{Schematic of PINN for  the  Klein-Gordon equation. The left
(physics-uninformed) network represents the surrogate of the PDE solution $u_{\bmTheta}(x,t)$ while the right (physics-informed) segment describes the PDE residual $\mathcal{F}=\frac{\partial^2 u_{\bmTheta}}{\partial t^2}+\alpha \Delta u_{\bmTheta}+N(u_{\bmTheta})-f$ calculation. The two parts share hyper-parameters, and they both contribute to the loss function. The loss
function includes a supervised loss of data measurements of $u(x,t)$ from the initial and
boundary conditions, namely, $\textnormal{MSE}_u$, and an unsupervised loss of PDE, namely, $\textnormal{MSE}_{\mathcal{F}}$.}
\label{fig: PINNschematic}
\end{figure}

PINN~\citep{raissi2019physics} uses the NN structure in Eq.~\eqref{eq: nn structure} to find the solution for the differential equation provided with the initial and boundary conditions. 
To demonstrate it, an example is presented in Figure~\ref{fig: PINNschematic} to solve the nonlinear Klein-Gordon equation using PINN.  Klein-Gordon equation is a second-order hyperbolic partial differential equation arising in many scientific fields like soliton dynamics and condensed matter physics~\citep{caudrey1975sine}, solid-state physics, and nonlinear wave equations~\citep{dodd1982solitons}.  The in-homogeneous Klein-Gordon equation is given by
\begin{equation}\label{eq: Klein-Gordon}
   u_{tt}+\alpha \Delta u+N(u) = f(x,t), \ x\in[-1,1], \ t>0,
\end{equation}
where $\Delta$ is a Laplacian operator and $N(u) = \beta u + \delta u^k$ is the nonlinear term with quadratic nonlinearity ($k = 2$) and cubic non-linearity ($k = 3$); $\lambda =\{\alpha, \beta, \delta\}$ are constants for the PDE. The initial and the boundary conditions are given by $u(x,0) = x$, $u(-1,t) = -\cos(t)$, and $u(1,t) = \cos(t)$. Specifically, the PINN algorithm aims to learn a NN surrogate $u_{\bmTheta}$ for predicting the solution $u$ of the governing PDE as shown in Figure~\ref{fig: PINNschematic}.

The NN solution must satisfy the governing equation given by the residual $\mathcal{F}(u_{\bmTheta})=u_{{\bmTheta}_{tt}}+\alpha \Delta u_{\bmTheta}+N(u_{\bmTheta}) - f(x,t)$ evaluated at randomly chosen $N_f$ residual points in the domain which is used to calculate $\textnormal{MSE}_{\mathcal{F}}$ in Eq.~\eqref{eq: definition MSE_f}. To construct the residuals in the loss function, derivatives of the solution $u_{\bmTheta}$ with respect to the independent variables $x$ and $t$ are required, which can be computed using the \textit{Automatic Differentiation} (AD)~\citep{baydin2018automatic}. AD is an accurate way to calculate derivatives in a computational graph compared to numerical differentiation since they do not suffer from the errors such as truncation and round-off errors~\citep{baydin2018automatic}. Thus, the PINN method is a grid-free method, which does not require mesh for solving equations, which is a popular approach used by Finite Element Method~\citep{donea2003finite}. 

The main feature of the PINN is that it can easily incorporate all the given information (for example, governing equation, experimental data and initial/boundary conditions) into the loss function, thereby recast the original problem into an optimization problem.  In the PINN algorithm, the loss function is defined as
\begin{equation} \label{eq: PINN loss function}
J(\bmTheta) = w_{\mathcal{F}}\textnormal{MSE}_{\mathcal{F}}+w_u\textnormal{MSE}_u,
\end{equation}
where the mean squared error (MSE) is given as
\begin{align}
    \texttt{PDE residual loss: }  \textnormal{MSE}_{\mathcal{F}} & =\frac{1}{N_f}\sum_{i=1}^{N_f}\left| \mathcal{F}_{\bmTheta}(\bm{x}_f^i) \right|^2, \label{eq: definition MSE_f} \\ 
   \texttt{Data loss: }  \textnormal{MSE}_u & =\frac{1}{N_u}\sum_{i=1}^{N_u}\left| u(\bm{x}_u^i) - u_{\bmTheta}(\bm{x}_u^i)\right|^2, \label{eq: definition MSE_u}
\end{align}
where  $\{\bm{x}_f^i\}_{i=1}^{N_f}$ represents the set of $N_f$ residual points and $\{\bm{x}_u^i\}_{i=1}^{N_u}$ represents the set of $N_u$ training data points.  $w_{\mathcal{F}}>0$ and $w_u>0$  are the weights for residual and training data points, respectively, which are user-defined or tuned automatically~\citep{wang2021understanding}. Specifically, we have
\begin{itemize}
    \item  \texttt{PDE residual loss} in Eq.~\eqref{eq: definition MSE_f} (i.e., the first term in Eq.~\eqref{eq: PINN loss function})  enforces the NN solution $u_{\bmTheta}$ to obey the PDE. If the dimension of $\bm{x}$ is one, it is an ordinary differential equation (ODE);
    \item  \texttt{Data loss} in Eq.~\eqref{eq: definition MSE_u} (i.e., the second term in Eq.~\eqref{eq: PINN loss function}) includes the known initial/boundary conditions and/or experimental data, which must be satisfied by the NN solution $u_{\bmTheta}$.
\end{itemize}

The resulting optimization problem is to find the minimum of the loss function by optimizing the parameters, i.e., we seek to find
\begin{equation}\label{eq: optimization}
\bmTheta^*=\underset{\bmTheta }{\arg \min}\ J(\bmTheta).
\end{equation}
The solution to problem~\eqref{eq: optimization} can be approximated iteratively by one of the forms of gradient descent algorithm, such as Adam~\citep{kingma2014adam} and L-BFGS~\citep{byrd1995limited}.



\subsection{Self-scalable Tanh Activation Function} \label{subsec: Proposed Activation function}
The role of an activation function is to induce non-linearity into the NN model. It is well known that NN with a non-polynomial activation function can model any function~\citep{leshno1993multilayer}.  The nonlinear activation function performs the nonlinear transformation to the input data, making it capable of learning and performing more complex tasks. In general, the activation functions need to be differentiable so that the back-propagation is possible, where the gradients are supplied to update the weights and biases. 

In PINN, the activation function also plays a role in finding the derivatives (for example, $\frac{du_{\bmTheta}}{dx}$ to find \texttt{PDE residual loss}). As mentioned previously, non-smooth activation functions are not suitable for the PINN task~\citep{mishra2020estimates}. The tanh and its adaptive modification~\citep{jagtap2020adaptive, jagtap2020locally}  remain the most sought-after activation function for PINN ~\citep{jagtap2020conservative, jagtap2020extended, singh2019pi, zhang2021physics}. Eq.~\eqref{eq: tanh} gives the widely used tanh function~\citep{lecun2012efficient} as
\begin{equation}\label{eq: tanh}
    \textnormal{tanh}(x)=\frac{e^x-e^{-x}}{e^x+e^{-x}}.
\end{equation}
tanh has several advantages including continuity and differentiability providing smooth gradient over the entire support~\citep{lecun2012efficient}. However, in the context of PINN, there are two main issues for the widely used tanh-based activation functions: 

 $\bullet$ The first one is that PINN is more likely to have the problem of vanishing gradients due to the calculation of \texttt{PDE residual loss}.  The vanishing gradients are caused by the \textit{saturation} of the activation function, which has the following definition.
\begin{definition}[Saturation~\citep{gulcehre2016noisy}]
An activation function $\sigma(x)$ with derivative $\sigma'(x)$  is said to right (resp. left) saturate if its limit as $x\to +\infty$ (resp. $x\to -\infty$) is zero. An activation function is said to saturate (without qualification) if it both
left and right saturates.
\end{definition}
For example, $\mathcal{F}_{\bmTheta}$ at a particular $x_f^i$ can be calculated by computing the derivative (i.e., $\frac{du_{\bmTheta}}{dx}|_{x=x_f^i}$) for any given problem (refer to Section \ref{sec: PINN study}) by AD. This is done by using the chain rule of differentiation considering the NN weights and biases as constant. With the notation introduced in Section \ref{subsec: PINN} and some simplification of it for clarity, the expression for the required derivative can be roughly given as 
\begin{equation} \label{eq: chain rule AD}
    \frac{d u_{\bmTheta}}{d x}\Bigg|_{x=x_f^i} = \Bigg[ \frac{\partial \mathcal{L}_D}{\partial z_{D-1}}\times \frac{\partial z_{D-1}}{\partial \mathcal{L}_{D-1}} \times \frac{\partial \mathcal{L}_{D-1}}{\partial z_{D-2}}\times\cdots\times\frac{\partial z_1}{\partial \mathcal{L}_1}\times\frac{\partial \mathcal{L}_1}{\partial x}\Bigg]\Bigg|_{x=x_f^i},
\end{equation}
which is very similar to the back-propagation that includes the gradient of activation functions at each layer (i.e., $\frac{\partial z_{k}}{\partial \mathcal{L}_{k}}$). The derivative in Eq.~\eqref{eq: chain rule AD} is a part of \texttt{PDE residual loss}, where  the gradient of \texttt{PDE residual loss}  with respect to weights and biases of PINN (i.e., $\frac{\partial \textnormal{MSE}_{\mathcal{F}}}{\partial \bmTheta})$ is  calculated by back-propagation~\citep{lagaris1998artificial,Goodfellow-et-al-2016}. This makes PINN have a ``deeper" structure, to calculate the gradients of parameters, than standard neural networks. Thus, PINN is more likely to suffer from vanishing gradients caused by saturation~\citep{wang2021understanding}. It is theoretically analyzed by \cite{wang2021understanding} that vanishing gradients can lead to a concentration of training gradients around zero for optimizing the \texttt{PDE residual loss}. The problem gets even more troublesome when higher-order derivatives and larger dimensional equations are solved. 




 $\bullet$ The second issue is the unstable training due to the requirement of varied magnitudes of weights. The output from tanh (or the adaptive version) is bounded between -1 and 1. When using tanh for regression problems, it is well known that the last layer should be linear to allow the output to take any value and not constrained by the activation function~\citep{mathew2020deep}. When the output is not normalized, it can lead to unstable training~\citep{ranjan2020understanding,bishop1995neural}. 
As already introduced in Section~\ref{sec: intro}, the output from PINN can not be normalized since  the magnitude is not known with the PDE and initial/boundary conditions. This implies, for an accurate model with tanh activation, the penultimate layer has an output between -1 and 1, but the final layer's output should be of the appropriate range (possibly high). This further implies that the last layer weights should be higher in magnitude than the rest of the layers. This complicates the training~\citep{bishop1995neural} for both tanh function and its adaptive modification. It is preferred that the model gradually scales into different orders of magnitude than just using the last layer for scaling up. 

Both of the mentioned issues are more pronounced when the outputs are of higher orders of magnitude. Therefore, to address these two issues
\begin{itemize}
    \item  First, there is a need for higher gradients and non-saturation behaviour for activation functions;
    \item Second, the activation function should be unbounded.
\end{itemize}
 Accordingly, the proposed Self-scalable tanh (Stan) activation function for $i$th neuron in $k$th layer ($k=1,2,\dots,D-1; \ i=1,2,\dots,N_k$) is proposed as
\begin{equation}\label{eq: proposed activation}
    \sigma^i_k(x)=\textnormal{tanh}(x)+\beta^i_k x\cdot\textnormal{tanh}(x),
\end{equation}
where  $\beta^i_k$ is the neuron-wise parameter that should be optimized. The second term, namely, $\beta^i_k x\cdot \textnormal{tanh}(x)$, is the self-scaling term that can help to better map different orders of magnitude of input and output when allowing the gradients to flow without vanishing. Figure~\ref{fig: stan function visual} shows the visualization of Stan activation and its gradient. \begin{figure}[!htb]
\centering
\subfloat[Stan activation function]{%
\resizebox*{7.1cm}{!}{\includegraphics{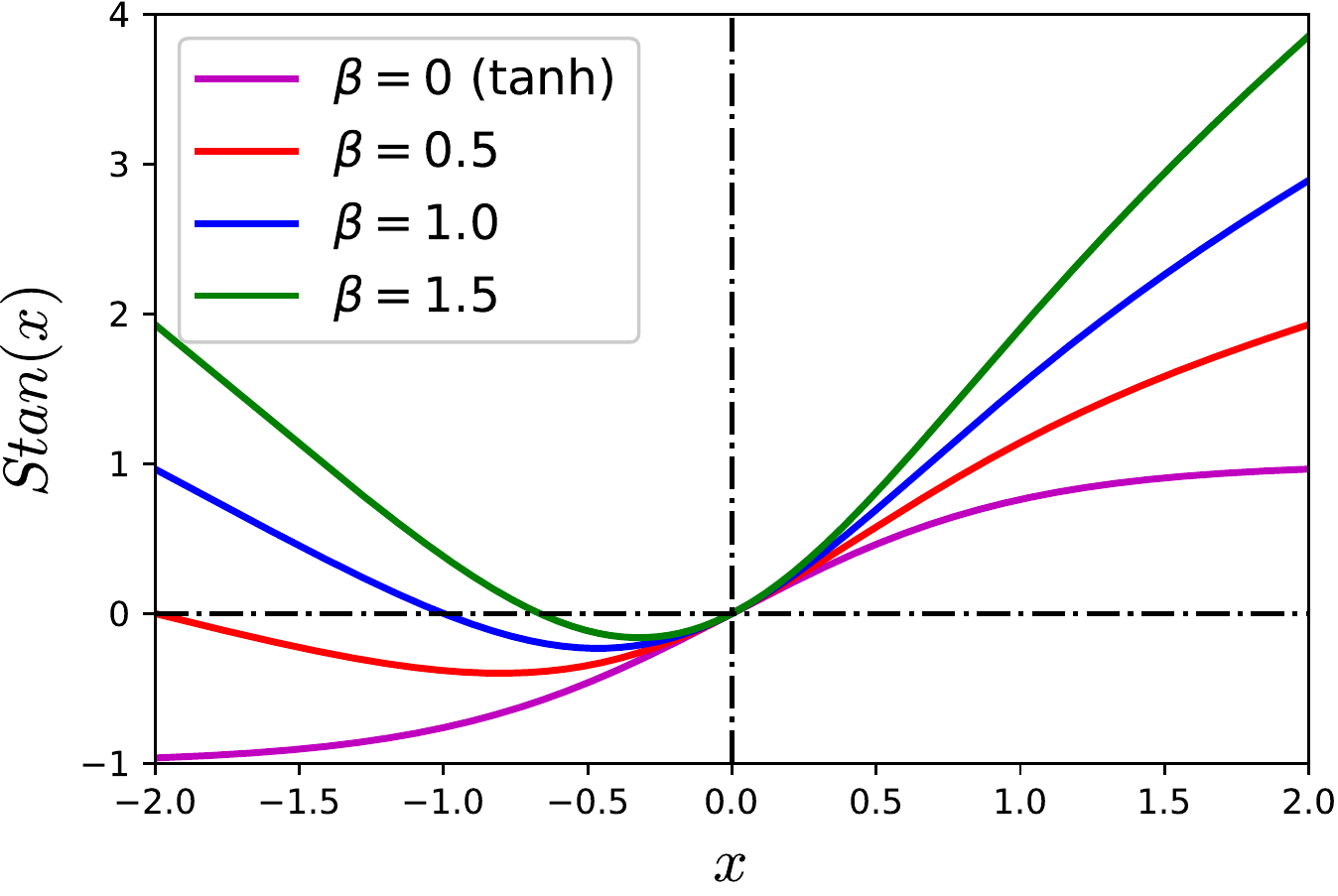}} \label{subfig: stan}  }\hspace{5pt}
\subfloat[Stan activation function gradient]{%
\resizebox*{7cm}{!}{\includegraphics{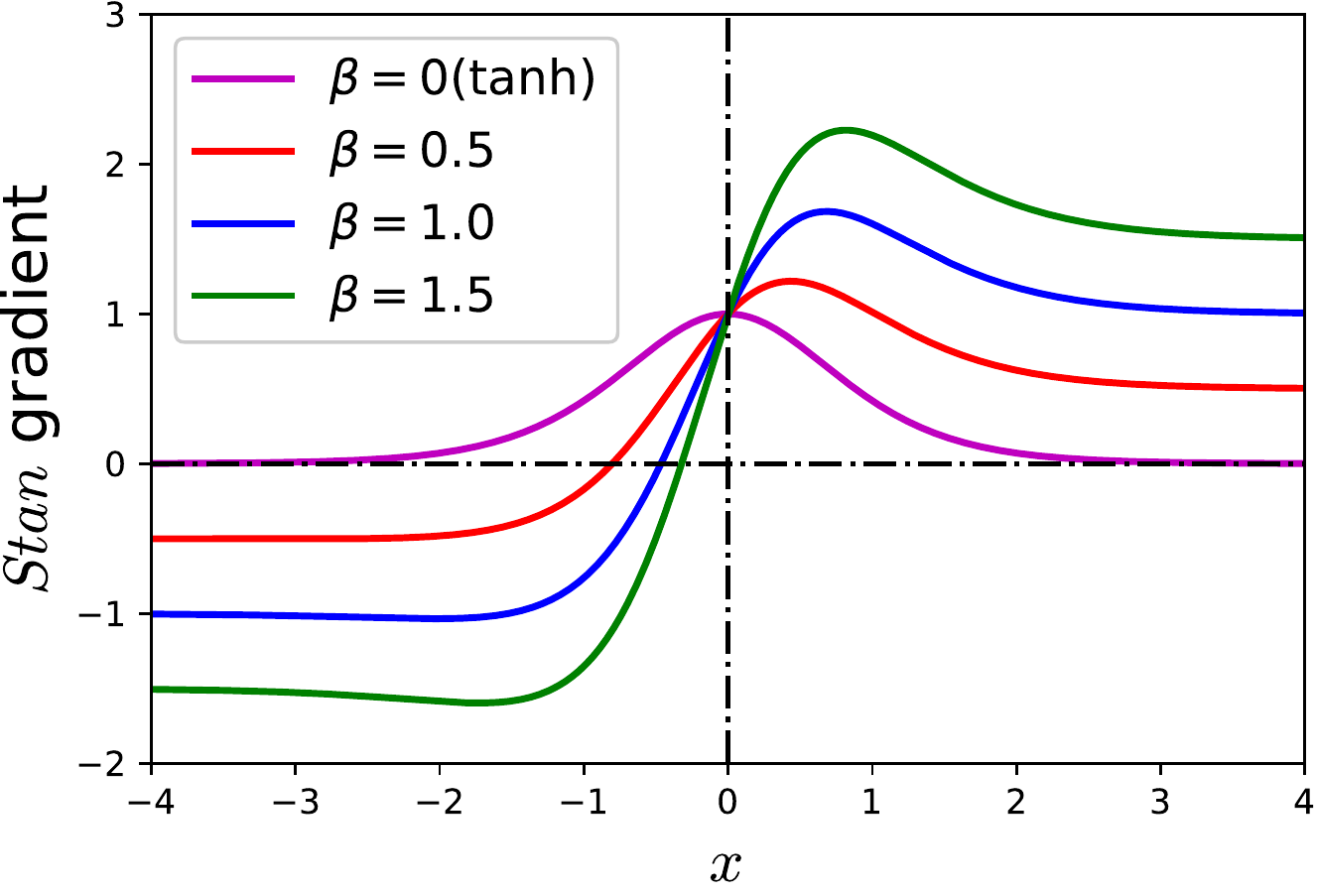}}  \label{subfig: stan gradient}}\hspace{5pt}\\
\caption{Visualization of the proposed Stan activation function in comparison with tanh. At $x=0$, the function value is 0 and the gradient is 1. The point of \textit{zero gradient} and \textit{maximum gradient} are dependent on $\beta$.} \label{fig: stan function visual}
\end{figure}
From Figure~\ref{subfig: stan gradient}, our proposed Stan activation function does not have the saturation problem, which can be formally quantified in the following proposition.
\begin{proposition}
If $\beta^i_k\neq 0$, the proposed activation function in~\eqref{eq: proposed activation} is not saturation.
\end{proposition}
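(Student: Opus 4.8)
The plan is to reduce the claim to a short computation of one-sided limits of the derivative $(\sigma^i_k)'$. By the preceding definition, an activation function fails to saturate as soon as at least one of $\lim_{x\to+\infty}(\sigma^i_k)'(x)$ and $\lim_{x\to-\infty}(\sigma^i_k)'(x)$ is nonzero; I will in fact show both are nonzero whenever $\beta^i_k\neq 0$. First I would differentiate \eqref{eq: proposed activation}: using $\tanh'(x)=1-\tanh^2(x)$ and the product rule,
\begin{equation*}
(\sigma^i_k)'(x)=\bigl(1-\tanh^2(x)\bigr)+\beta^i_k\tanh(x)+\beta^i_k\,x\bigl(1-\tanh^2(x)\bigr),
\end{equation*}
so $(\sigma^i_k)'$ is a sum of three explicit terms whose tail behaviour is easy to track.

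Next I would pass to the limits. As $x\to+\infty$, $\tanh(x)\to 1$, hence $1-\tanh^2(x)\to 0$ and $\beta^i_k\tanh(x)\to\beta^i_k$; the only term needing care is $\beta^i_k\,x\,(1-\tanh^2(x))$, which I would show vanishes because $1-\tanh^2(x)$ decays at rate $e^{-2x}$ and exponential decay dominates the linear factor $x$. Therefore $\lim_{x\to+\infty}(\sigma^i_k)'(x)=\beta^i_k$. Symmetrically, as $x\to-\infty$, $\tanh(x)\to-1$, $1-\tanh^2(x)\to 0$, and $x\,(1-\tanh^2(x))\to 0$ by the same estimate, so $\lim_{x\to-\infty}(\sigma^i_k)'(x)=-\beta^i_k$. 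Since $\beta^i_k\neq 0$, neither limit is $0$; hence $\sigma^i_k$ neither left- nor right-saturates, and in particular it does not saturate.

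The computation is routine, so the closest thing to an obstacle is simply phrasing the one genuinely nontrivial estimate cleanly, namely $\lim_{|x|\to\infty} x\,(1-\tanh^2(x))=0$. I would justify it either by the elementary bound $1-\tanh^2(x)\le 4e^{-2|x|}$ together with $|x|e^{-2|x|}\to 0$, or by applying L'H\^opital's rule to $x/\cosh^2 x$. It is also worth stating the logical point explicitly: defeating a single tail would already suffice for ``not saturating,'' but because the self-scaling term $\beta^i_k x\tanh(x)$ forces a constant limiting slope $\pm\beta^i_k$ on each tail, we obtain the stronger two-sided conclusion --- which is precisely why the gradient curve in Figure~\ref{fig: stan function visual} flattens toward a nonzero value rather than toward $0$.
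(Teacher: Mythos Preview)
Your proof is correct and follows essentially the same approach as the paper: compute $(\sigma^i_k)'(x)$ and read off $\lim_{x\to\pm\infty}(\sigma^i_k)'(x)=\pm\beta^i_k$. If anything, you are more careful than the paper, since you explicitly justify $\lim_{|x|\to\infty}x\bigl(1-\tanh^2(x)\bigr)=0$ via the exponential decay of $1-\tanh^2(x)$, whereas the paper simply states the final limits without comment.
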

\begin{proof}
The first-order derivative of  $\sigma^i_k(x)$ is 
\begin{equation*}
    (\sigma_k^{i})'(x) = (1+\beta^i_k x)\Big(1-\textnormal{tanh}^2(x)\Big) + \beta^i_k\textnormal{tanh}(x).
\end{equation*}
Therefore, we have
\begin{equation*}
\begin{aligned}
     \lim_{x \to +\infty} (\sigma^i_k)'(x) &= \beta^i_k,\\
       \lim_{x \to -\infty} (\sigma^i_k)'(x)& = -\beta^i_k.
\end{aligned}
\end{equation*}
\end{proof}

 

The proposed Stan gives additional $\sum_{k=1}^{D-1}N_k$ parameters to be optimized. In this case, we have $\tilde{\bmTheta}=\{\bm{W}^k,\bm{b}^k\}_{k=1}^D\bigcup\{\bm{\beta}^k\}_{k=1}^{D-1}$ are trainable parameters,~where $\bm{\beta}^k=(\beta_k^1,\beta_k^2,\dots,\beta_k^{N_k})^{\top}$.  The parameter $\tilde{\bmTheta}$ is updated as
\begin{equation} \label{eq: update rule}
    \tilde{\bmTheta}_{m+1} = \tilde{\bmTheta}_m -\eta_m\nabla J(\tilde{\bmTheta}_m),
\end{equation}
where $\eta_m\in (0,1]$ is the learning rate at $m$th iteration. The proposed activation function-based PINN algorithm is summarized in Algorithm~\ref{alg: PINN}. \begin{algorithm}[!htp]
\caption{Self-scalable tanh activation function-based PINN algorithm}  \label{alg: PINN} 
\textbf{Input:}  Training data: $u_{\tilde{\bmTheta}}$ network $\{\bm{x}_u^i\}_{i=1}^{N_u}$;  Residual training points: $\{\bm{x}_f^i\}_{i=1}^{N_f}$
\begin{algorithmic}
\State  \textbf{Step 1:} Construct neural network $u_{\tilde{\bmTheta}}$ with random initialization of parameters $\tilde{\bmTheta}$.
\State \textbf{Step 2:} Construct the residual $\mathcal{F}_{\tilde{\bmTheta}}$ of neural network  by substituting surrogate $u_{\tilde{\bmTheta}}$ into the governing equation using automatic differentiation and other arithmetic operations.
\State \textbf{Step 3:} Specification of loss function:
\begin{equation}\label{eq: final loss function}
    J(\tilde{\bmTheta}) =\frac{w_{\mathcal{F}}}{N_f}\sum_{i=1}^{N_f}\left| \mathcal{F}_{\tilde{\bmTheta}}(\bm{x}_f^i) \right|^2 + \frac{w_u}{N_u}\sum_{i=1}^{N_u}\left| u^i - u_{\tilde{\bmTheta}}(\bm{x}_u^i)\right|^2.
\end{equation}
\State  \textbf{Step 4:}  Find the best parameters $\tilde{\bmTheta}^*$ using a suitable optimization method for minimizing the loss function in~\eqref{eq: final loss function} as
\begin{equation*}
    \tilde{\bmTheta}^*=\underset{\tilde{\bmTheta}}{\arg \min}\ J(\tilde{\bmTheta}).
\end{equation*}
\end{algorithmic}
\textbf{Output:} $u_{\tilde{\bmTheta}^*}$ network 
\end{algorithm}

 We now provide a theoretical result regarding the PINN with the proposed Stan. The following theorem states that a gradient descent algorithm minimizing our objective function in Eq.~\eqref{eq: final loss function} with a L2 regularization term does not converge to a spurious stationary point  given appropriate initialization and learning rates. Now, let $J_{\gamma}(\tilde{\bmTheta})\coloneqq J_{\gamma}(\tilde{\bmTheta})+\gamma\|B\|_2^2$ (i.e., the objective function in Eq.~\eqref{eq: final loss function} with the L2 regularization) where $B\coloneqq \{\bm{\beta}_k\}_{k=1}^{D-1}$, $\gamma>0$ is a fixed hyper-parameter that can be defined by users. 
\begin{theorem}[No Spurious Stationary Points] \label{thm: no spurious}
Let $\{\tilde{\bmTheta}_m\}_{m\geq 0}$ be a sequence generated by a gradient descent algorithm based on the update rule in Eq.~\eqref{eq: update rule}. Assume that $J_{\gamma}(\tilde{\bmTheta}_0)<J_{\gamma}(\bm{0})$, $J_{\gamma}$ is differentiable, and that for each $i\in\{1,\dots,N_f\}$, there exist a differentiable function $\phi^j$ and input  $\rho^j$ such that $\left|\mathcal{F}(\bm{x}_f^j) \right|^2=\phi^j(u_{\tilde{\bmTheta}}(\rho^j))$, another differentiable function $\pi^i_k$ so that $\sigma^i_k(x)=\pi^i_k(\beta^i_k x)$ for all $i,k$.  Suppose one of the following conditions holds:
	\begin{enumerate}[label=(\roman*)]
	\item  $\nabla J_{\gamma}(\tilde{\bmTheta})$  is Lipschitz continuous with Lipschitz constant $C>0$. That is, $\|\nabla J_{\gamma}(\tilde{\bmTheta})-\nabla J_{\gamma}(\tilde{\bmTheta}')\|_2\leq C\|\tilde{\bmTheta}-\tilde{\bmTheta}'\|_2$ for all $\tilde{\bmTheta}, \tilde{\bmTheta}'$ in its domain. And $\varepsilon \leq \eta_m\leq \frac{2-\varepsilon}{C}$, where  $\varepsilon>0$ is a fixed positive number;
	\item   $\nabla J_{\gamma}(\tilde{\bmTheta})$  is Lipschitz continuous, $\eta_m \to 0$, and $\sum_{m=1}^{+\infty}\eta_m=+\infty$;
    \item  The learning rate $\eta_m$ is chosen by the minimization rule, the limited minimization rule, the Armjio rule, or the Goldstein rule~\citep{bertsekas1997nonlinear}. 
\end{enumerate}
Then, no limit point of $\{\tilde{\bmTheta}_m\}_{m\geq 0}$ is a spurious stationary point.
\end{theorem}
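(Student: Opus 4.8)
The plan is to reduce the no-spurious-stationary-points claim to a statement about the structure of the gradient of $J_\gamma$ at any candidate limit point, and then to exploit the two factorization hypotheses — $\left|\mathcal{F}(\bm{x}_f^j)\right|^2 = \phi^j(u_{\tilde{\bmTheta}}(\rho^j))$ and $\sigma^i_k(x) = \pi^i_k(\beta^i_k x)$ — to show that stationarity forces the value of the objective to coincide with its value at $\tilde{\bmTheta} = \bm{0}$, which by the descent inequality is impossible along the gradient-descent trajectory. First I would invoke standard convergence results for gradient descent (e.g.\ the monotone-decrease lemma under each of the three step-size regimes (i)--(iii), as in \citet{bertsekas1997nonlinear}) to conclude that $J_\gamma(\tilde{\bmTheta}_m)$ is nonincreasing, hence $J_\gamma(\tilde{\bmTheta}_m) \le J_\gamma(\tilde{\bmTheta}_0) < J_\gamma(\bm{0})$ for all $m$, and that every limit point $\bar{\bmTheta}$ of the sequence satisfies $\nabla J_\gamma(\bar{\bmTheta}) = \bm{0}$ and $J_\gamma(\bar{\bmTheta}) \le J_\gamma(\tilde{\bmTheta}_0) < J_\gamma(\bm{0})$. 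So it suffices to prove: any stationary point $\bar{\bmTheta}$ of $J_\gamma$ with $J_\gamma(\bar{\bmTheta}) < J_\gamma(\bm{0})$ is a global minimum.

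The heart of the argument is to show that the only stationary points with objective value below $J_\gamma(\bm{0})$ are global minimizers; equivalently, any non-global stationary point $\bar{\bmTheta}$ must have $J_\gamma(\bar{\bmTheta}) \ge J_\gamma(\bm{0})$. I would argue by contradiction: suppose $\bar{\bmTheta}$ is stationary, $J_\gamma(\bar{\bmTheta}) < J_\gamma(\bm{0})$, but $\bar{\bmTheta}$ is not a global minimum. The factorization $\sigma^i_k(x) = \pi^i_k(\beta^i_k x)$ says each activation's nonlinearity is ``carried'' by its scaling parameter $\beta^i_k$, so that differentiating $J_\gamma$ with respect to $\beta^i_k$ produces the same kind of term one gets by differentiating with respect to a weight feeding that neuron — up to a factor involving $x$ versus $\beta^i_k$. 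Combined with the outer factorization $\left|\mathcal{F}(\bm{x}_f^j)\right|^2 = \phi^j(u_{\tilde{\bmTheta}}(\rho^j))$, which lets one write the PDE-residual part of the loss as a function of network outputs at the points $\rho^j$ (so the chain rule through $u_{\tilde{\bmTheta}}$ is the only channel), I expect one can show that $\nabla_{B} J_\gamma(\bar{\bmTheta}) = \bm{0}$ together with $\nabla_{\bm{W},\bm{b}} J_\gamma(\bar{\bmTheta}) = \bm{0}$ forces a rigid relation. Specifically, the $2\gamma B$ term coming from the regularizer must be cancelled by the data-fitting gradient in $B$; pairing this with the weight-gradient equations and using positive homogeneity-type identities from $\pi^i_k(\beta^i_k x)$ should yield that $B = \bm{0}$ at $\bar{\bmTheta}$, or that the network collapses to a constant map, in either case giving $J_\gamma(\bar{\bmTheta}) \ge J_\gamma(\bm{0})$ — the desired contradiction. (The role of the hypothesis $J_\gamma(\tilde{\bmTheta}_0) < J_\gamma(\bm{0})$ is exactly to exclude the degenerate stationary point at the origin and its level set from being limit points.)

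The main obstacle will be making the ``rigid relation'' step precise: turning the abstract factorization assumptions into an actual algebraic identity linking $\nabla_B J_\gamma$, $\nabla_{\bm W} J_\gamma$, and $B$ itself. One has to be careful that the chain-rule bookkeeping through a depth-$D$ composition with per-neuron scaling parameters really does close up — i.e.\ that every term in $\partial J_\gamma / \partial \beta^i_k$ is matched, with a controlled scalar multiplier, by a term in $\partial J_\gamma / \partial W^k_{i,\cdot}$ — and that the multiplier is nonzero wherever it needs to be. A secondary technical point is handling the three step-size regimes uniformly: for (i) this is the classical sufficient-decrease lemma via Lipschitz smoothness, for (ii) it is the diminishing-step argument, and for (iii) it is the line-search guarantee, but in each case one must verify that limit points are genuinely stationary (not merely that the gradient norm has a subsequence going to zero without a limit point existing), which is where the coercivity implied by the $\gamma\|B\|_2^2$ term and the sublevel-set boundedness will be used. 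I would structure the final writeup as: (a) a lemma that gradient descent under (i)--(iii) produces a nonincreasing objective with stationary limit points; (b) the key structural lemma that stationary points with value below $J_\gamma(\bm 0)$ are global minima, proved via the factorization identities; (c) combine.
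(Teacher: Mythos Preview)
Your proposal is correct and follows essentially the same route as the paper: invoke standard descent guarantees so that any limit point is stationary with value strictly below $J_\gamma(\bm 0)$, then use the factorizations to derive an algebraic identity forcing $B=\bm 0$ at stationarity, contradicting the strict inequality. The precise ``rigid relation'' you are anticipating is exactly the paper's identity $\beta^i_k\,\partial_{\beta^i_k} J_\gamma = \bm W^k_{i\cdot}(\partial_{\bm W^k_{i\cdot}} J_\gamma)^\top + \bm b^{i,k}\,\partial_{\bm b^{i,k}} J_\gamma + 2\gamma(\beta^i_k)^2$, so the bias gradient enters alongside the weight gradient and the argument closes immediately.
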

\begin{proof}
See proof in Appendix~\ref{appendix: proofs}.
\end{proof}
The initial condition $J_{\gamma}(\tilde{\bmTheta}_0)<J_{\gamma}(\bm{0})$ means that the initial value $J_{\gamma}(\tilde{\bmTheta}_0)$ needs to be less than that of a constant network. This analysis is not possible without introducing the $\{\bm{\beta}_k\}_{k=1}^{D-1}$ in the proposed activation function.  The above Theorem~\ref{thm: no spurious} shows that if the algorithm converges to a stationary point, it converges to global optimal solutions under three independent conditions. The conditions \textit{(i)}, \textit{(ii)}, and \textit{(iii)} correspond to the cases of the constant learning rate, diminishing learning rate,
and adaptive learning rate, respectively. For our experiments in Sections~\ref{sec: numerical study} and~\ref{sec: PINN study}, the constant learning rate is adopted. The training loss from these experiments, which usually converge to zero, illustrates that the algorithm can escape the problem of Spurious Local Minima~\citep{safran2018spurious} in neural networks. 







\section{Numerical Study} \label{sec: numerical study}
In Section~\ref{subsec: function approximation}, nonlinear smooth and discontinuous functions used in~\cite{jagtap2020adaptive} are used to verify the effectiveness of the proposed activation function, namely, Stan. The sensitivity analysis on the initialization of $\beta^i_k$s is presented in Section~\ref{subsec: sensitivity analysis}.  In all analyses, tanh and Neuron-wise Locally Adaptive Activation Function (N-LAAF)~\cite{jagtap2020locally} are selected as benchmarks for comparison with the proposed activation function, which are state-of-the-art activation functions for PINN.  N-LAAF is an adaptive modification of the tanh function by introducing an adaptive parameter $a^i_k$ for every neuron as $\textnormal{tanh}(a^i_kx)$. For this numerical study, the same setup in~\citep{jagtap2020adaptive} is used, where the neural network architecture  consists of four hidden layers with 50 neurons in each layer, and Adam optimizer (learning rate 0.0008) is utilized for training. All the results are repeated ten times with different initializations of NN weights and biases to obtain the average performance. The codes of all the experiments are implemented in Python 3 with Pytorch 1.9.0 package. The GPU used in this work is an NVIDIA 2080 Ti.
\begin{figure}[!htb]\vspace{-0.0cm}
\centering
\subfloat[Empirical convergence of training loss]{%
\resizebox*{7cm}{!}{\includegraphics{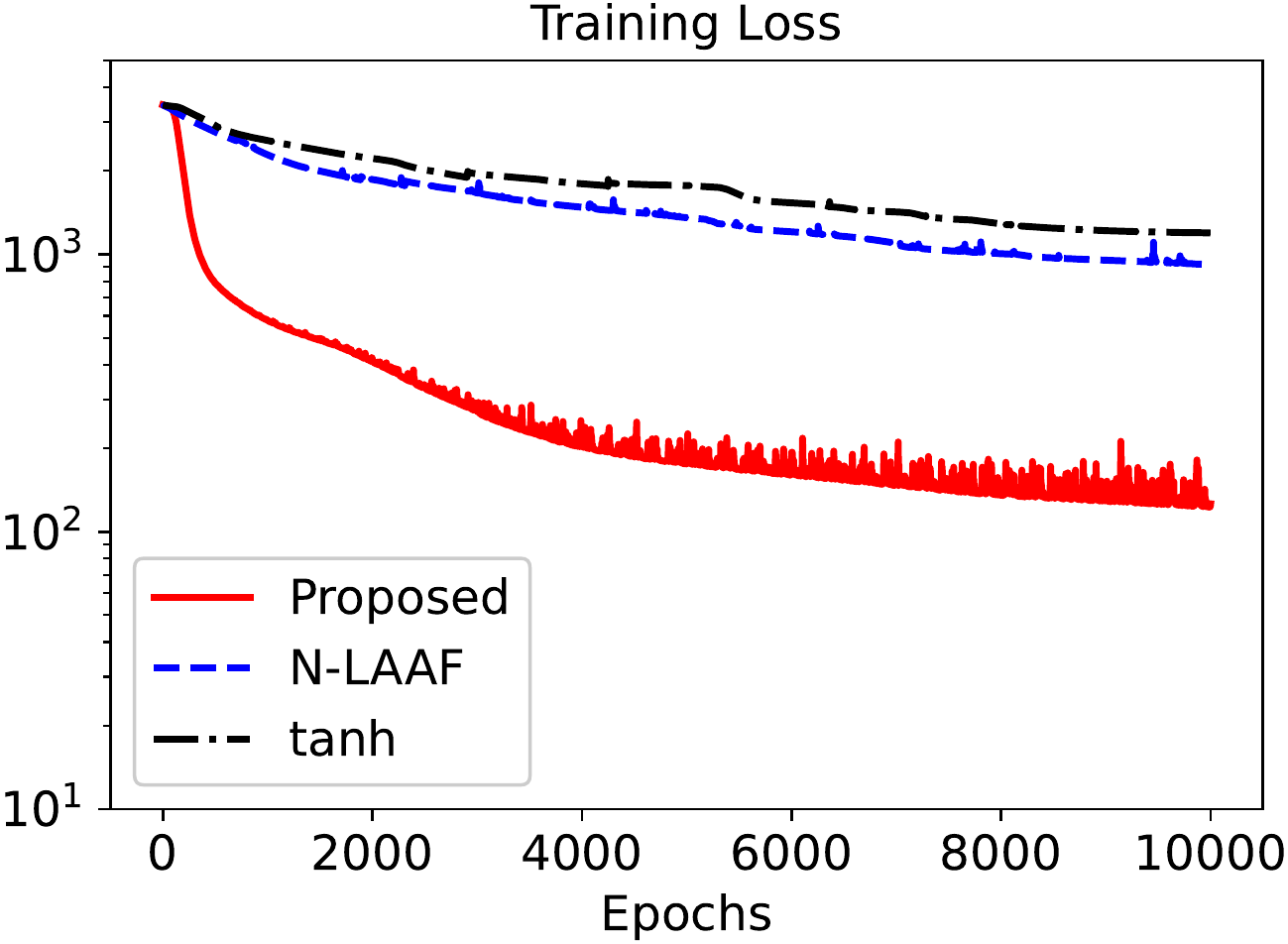}} \label{subfig: smooth function-training loss}  }\hspace{5pt}
\subfloat[Empirical convergence of $\beta^i_k$ in one of the neurons]{%
\resizebox*{7.5cm}{!}{\includegraphics{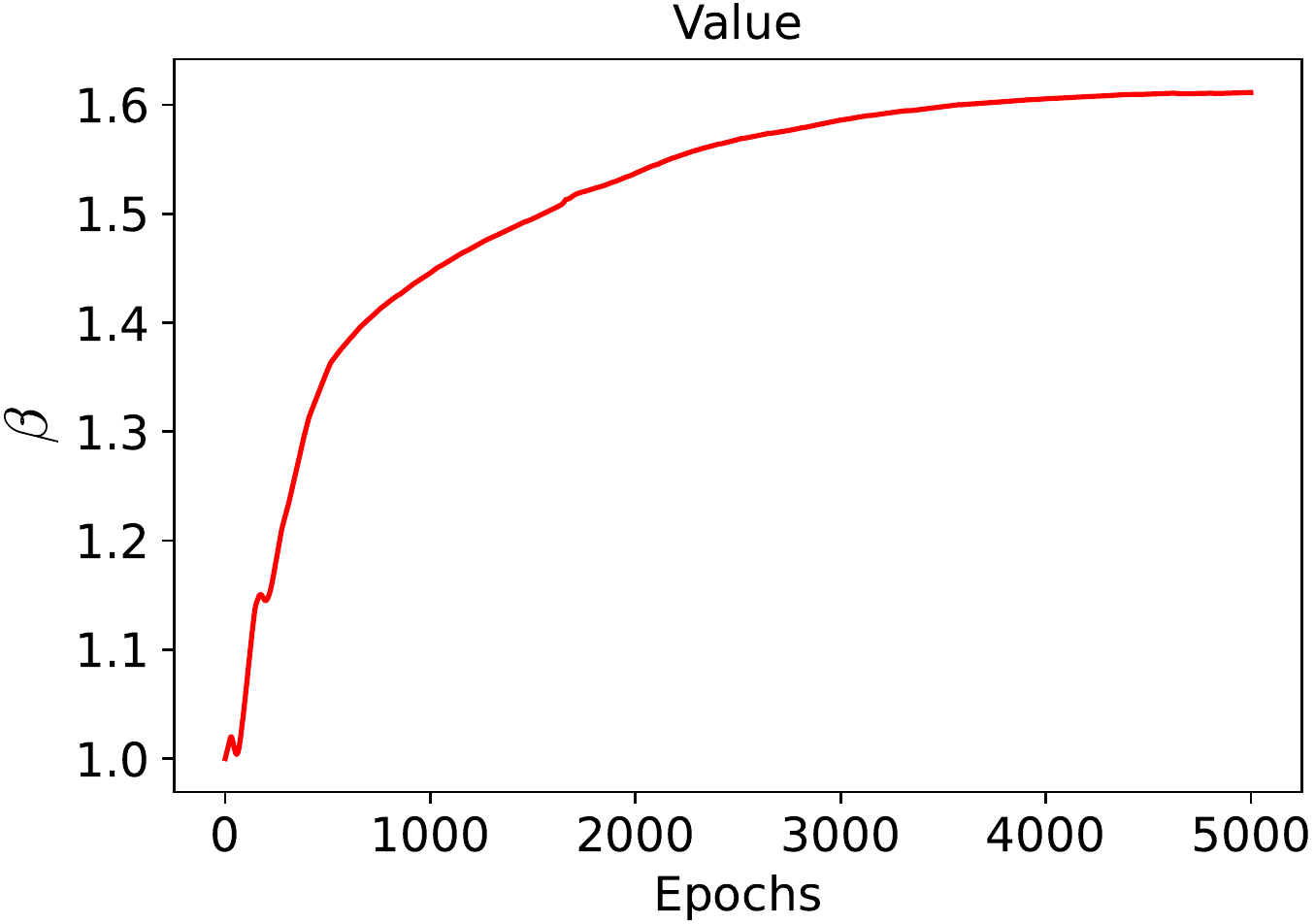}}   \label{subfig: smooth function-beta} }\\
\subfloat[Neural network solution of the proposed method]{%
\resizebox*{5cm}{!}{\includegraphics{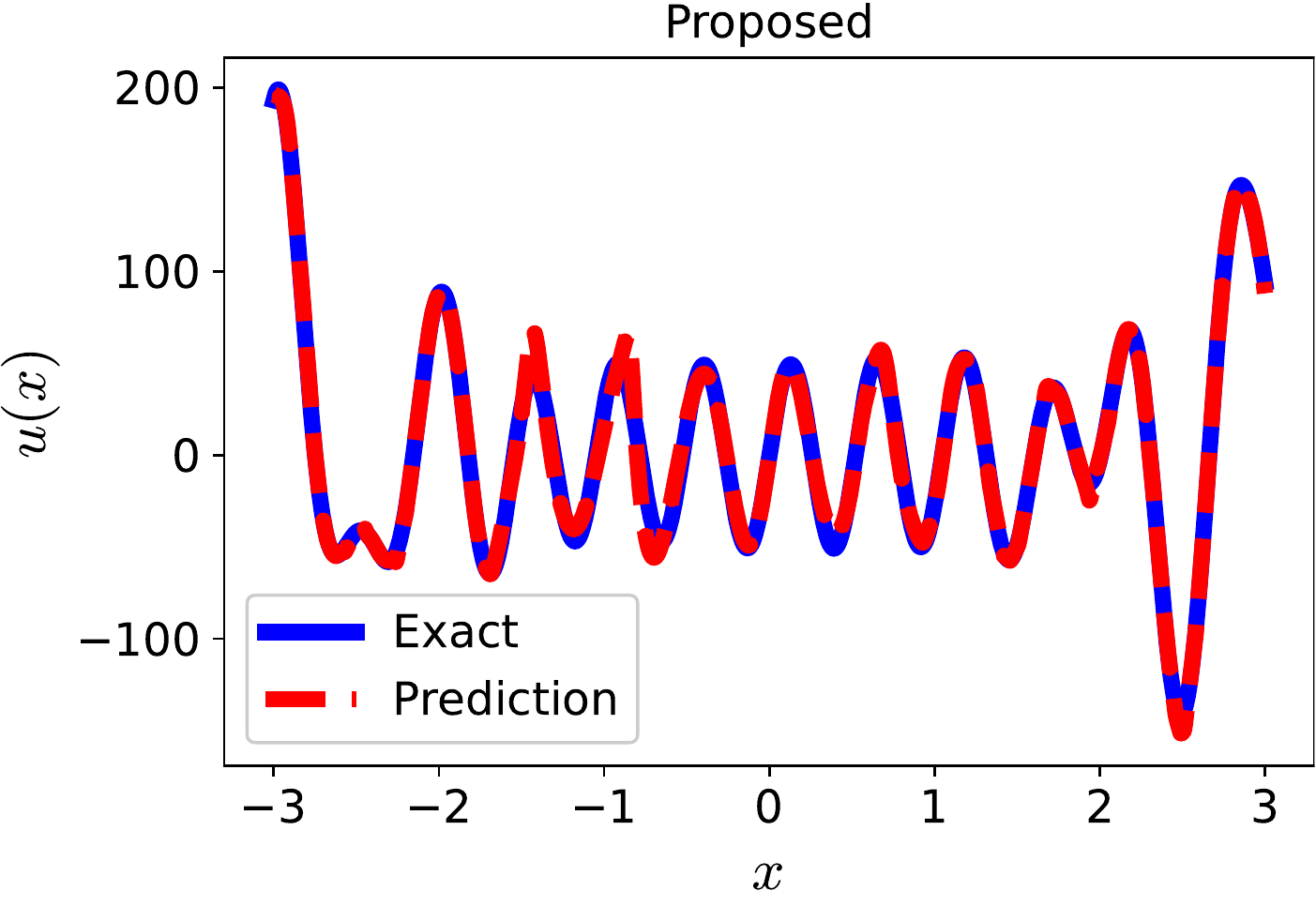}} \label{subfig: smooth function-proposed} }
\subfloat[Neural network solution of N-LAAF]{%
\resizebox*{5cm}{!}{\includegraphics{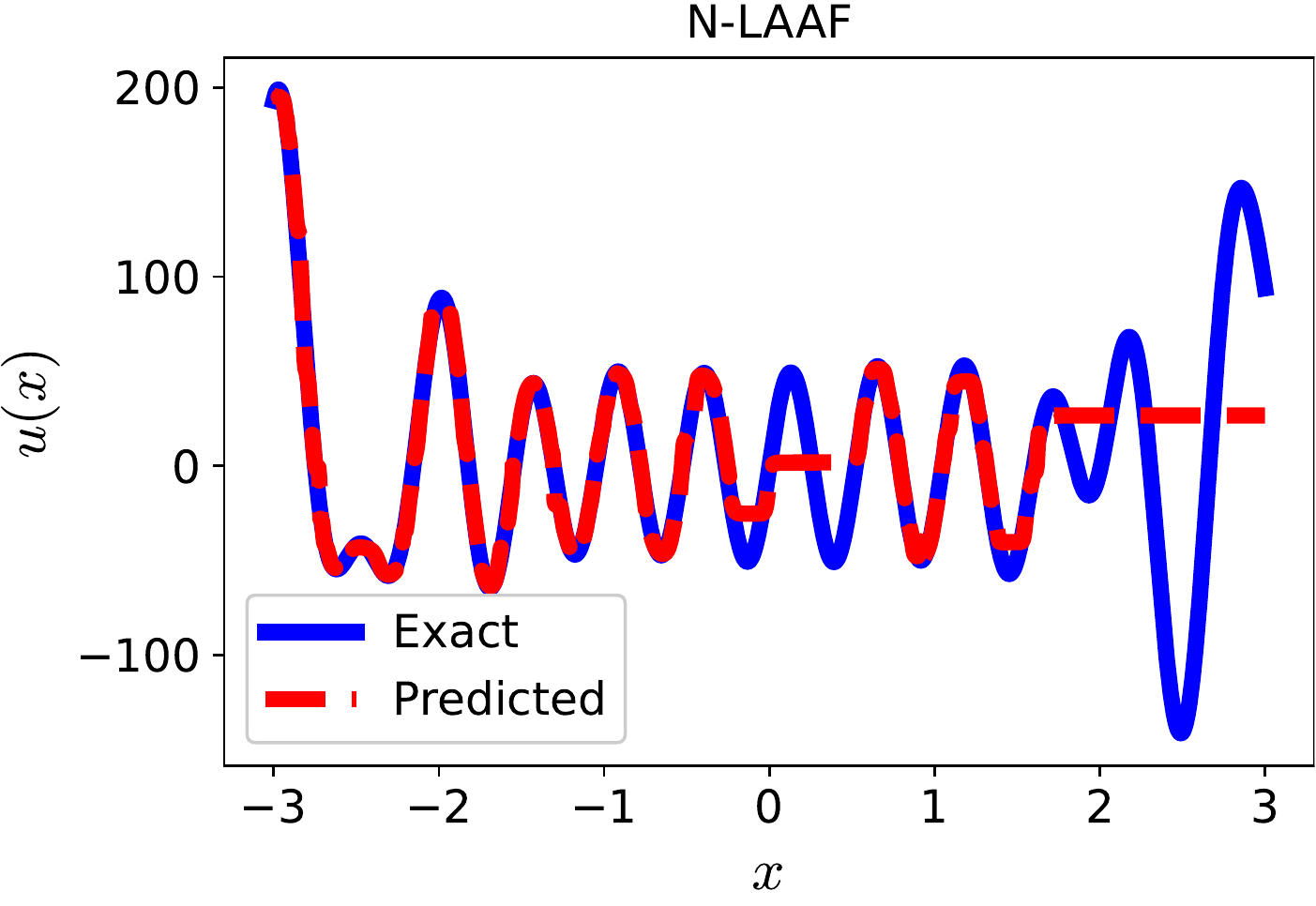}} \label{subfig: smooth function-N-LAAF} }
\subfloat[Neural network solution of tanh]{%
\resizebox*{5cm}{!}{\includegraphics{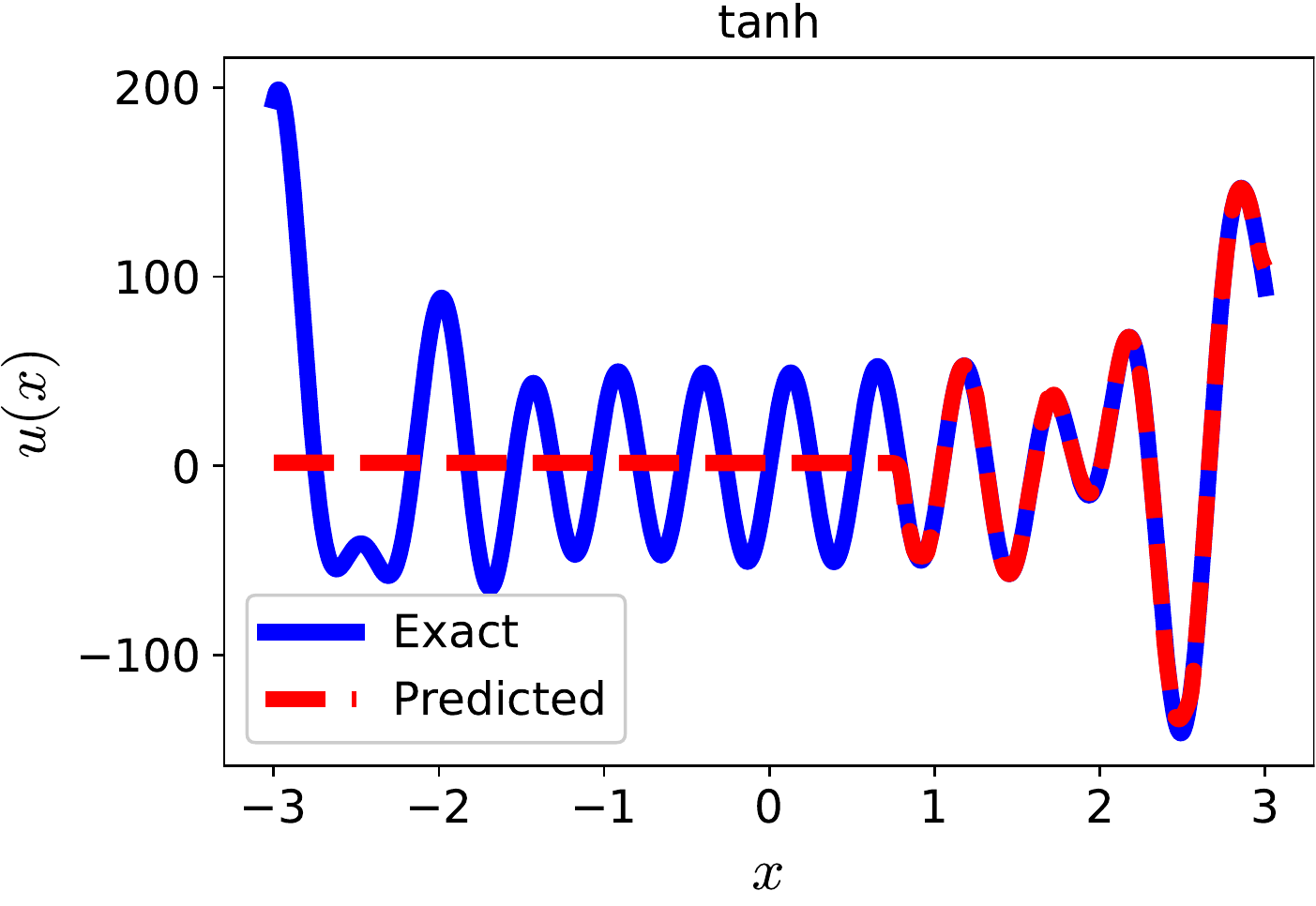}} \label{subfig: smooth function-tanh} }
\caption{Results of the smooth function: (a) Empirical convergence of training loss; (b) Empirical convergence of $\beta^i_k$; (c) Neural network solution of the proposed method; (d) Neural network solution of N-LAAF; (e) Neural network solution of tanh.} \label{fig: performance of smooth function}
\end{figure}
\subsection{Neural Network Approximation of Functions} \label{subsec: function approximation}
In this subsection, a standard NN (without the physics-informed part) is employed to approximate smooth and discontinuous functions. Specifically, the smooth and discontinuous functions defined by~\cite{jagtap2020adaptive} are scaled with a constant factor given as
\begin{equation} \label{eq: smooth function expression}
    u(x) = 50 \times  \Big[(x^3 - x) \frac{\sin(7x)}{7} + \sin(12x) \Big], \,\, x \in [-3,3],
\end{equation}
and
\begin{align} \label{eq: discontinuous function expression}
 u(x) = \Bigg\{
 \begin{split}
     40\sin(6x),\,\, -4 &\leq x \leq 0\\
     200 + 20x\cos(12x),\,\, 0&<x \leq 3.75. \\
 \end{split}    
\end{align}
300 randomly sampled points with their corresponding $u(x)$ values are used to train the model and 1,000 evenly spaced points for testing. The output data is not normalized to test the effectiveness of the proposed activation function. The parameters $\beta^i_k$s  are initialized with the value 1. The loss function for this case is the mean squared error of prediction on the training data as given below
\begin{equation*} \label{eq: regression loss}
     \textnormal{MSE}_u  =\frac{1}{300}\sum_{i=1}^{300}\left| u(x^i_u) - u_{\tilde{\bmTheta}}(x^i_u)\right|^2.
\end{equation*}
   
\begin{figure}[!htb]\vspace{-0.0cm}
\centering
\subfloat[Empirical convergence of training loss]{%
\resizebox*{7cm}{!}{\includegraphics{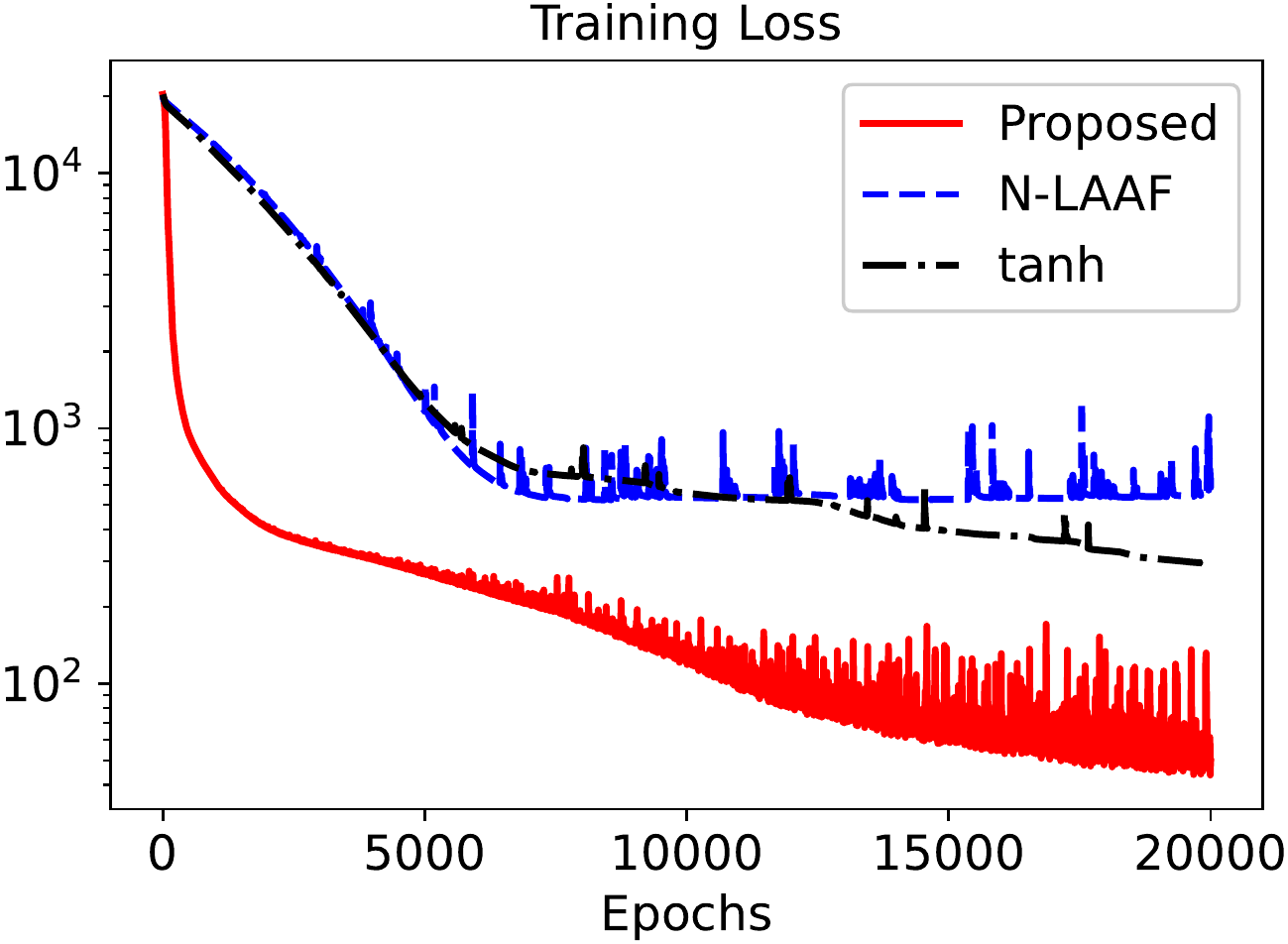}} \label{subfig: discontinuous function-training loss} }\hspace{5pt}
\subfloat[Empirical convergence of $\beta^i_k$ in one of the neurons]{%
\resizebox*{7.6cm}{!}{\includegraphics{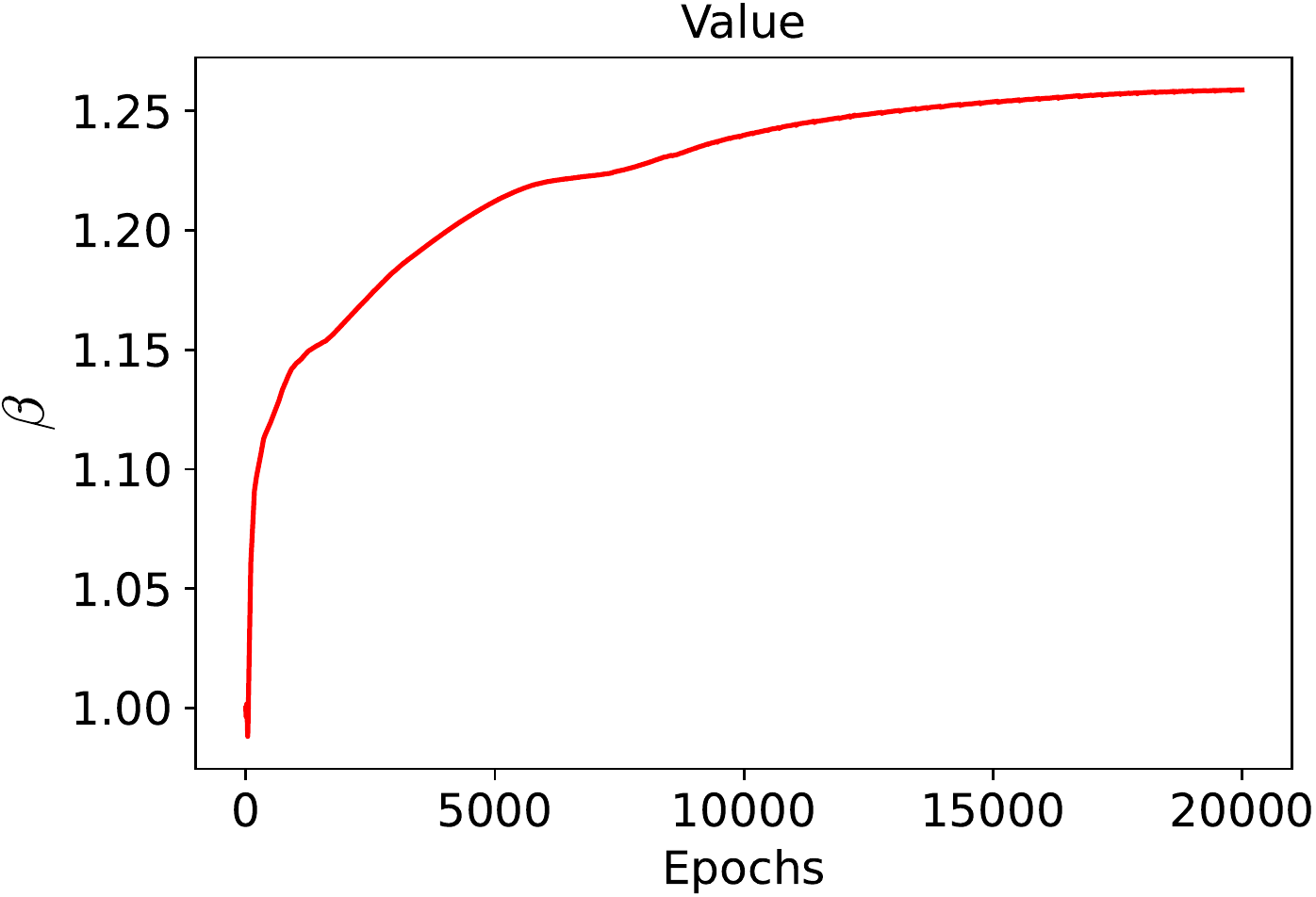}} \label{subfig: discontinuous function-beta}  }\\
\subfloat[Neural network solution of the proposed method]{%
\resizebox*{5cm}{!}{\includegraphics{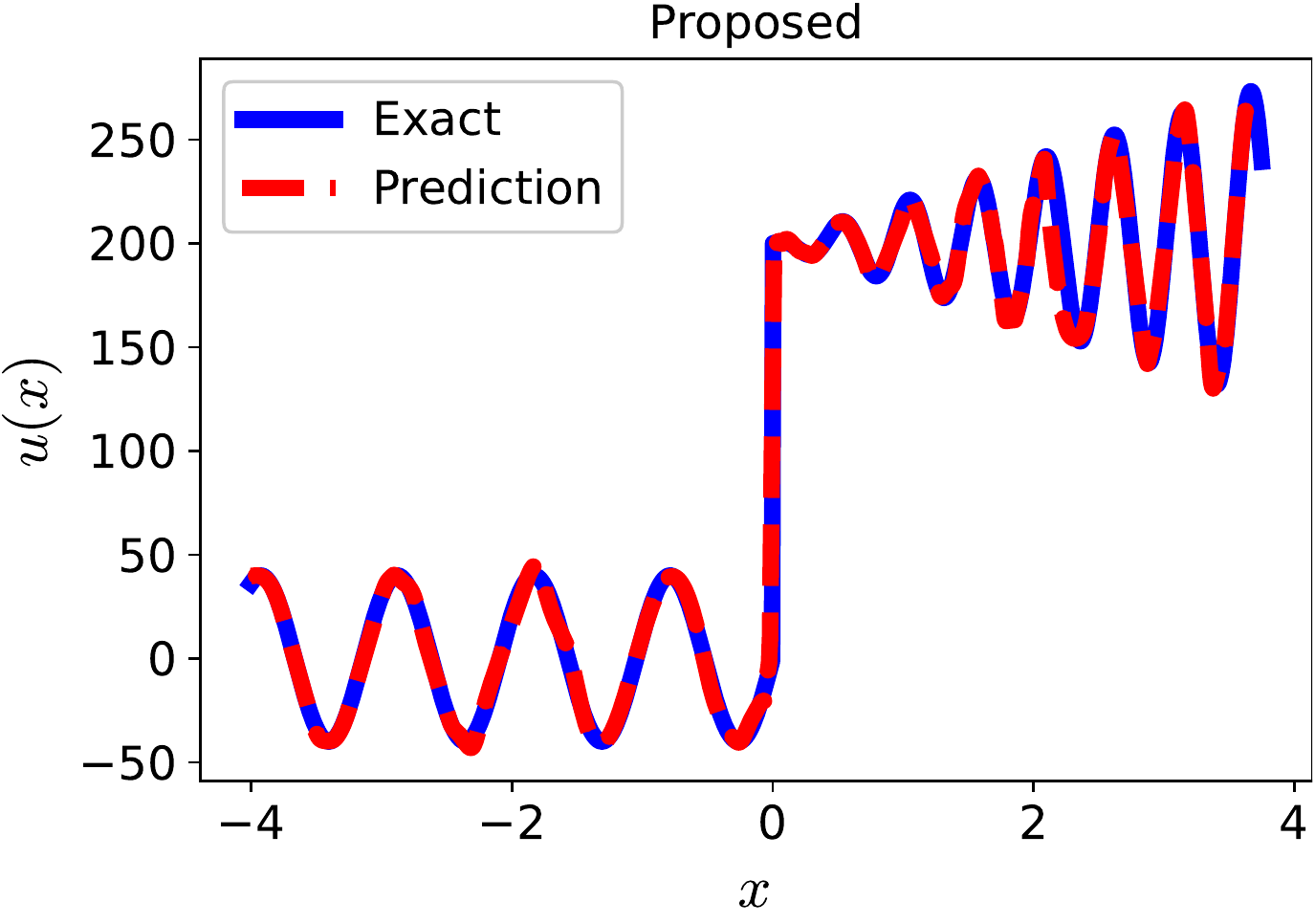}}  \label{subfig: discontinuous function-proposed} }
\subfloat[Neural network solution of N-LAAF]{%
\resizebox*{5cm}{!}{\includegraphics{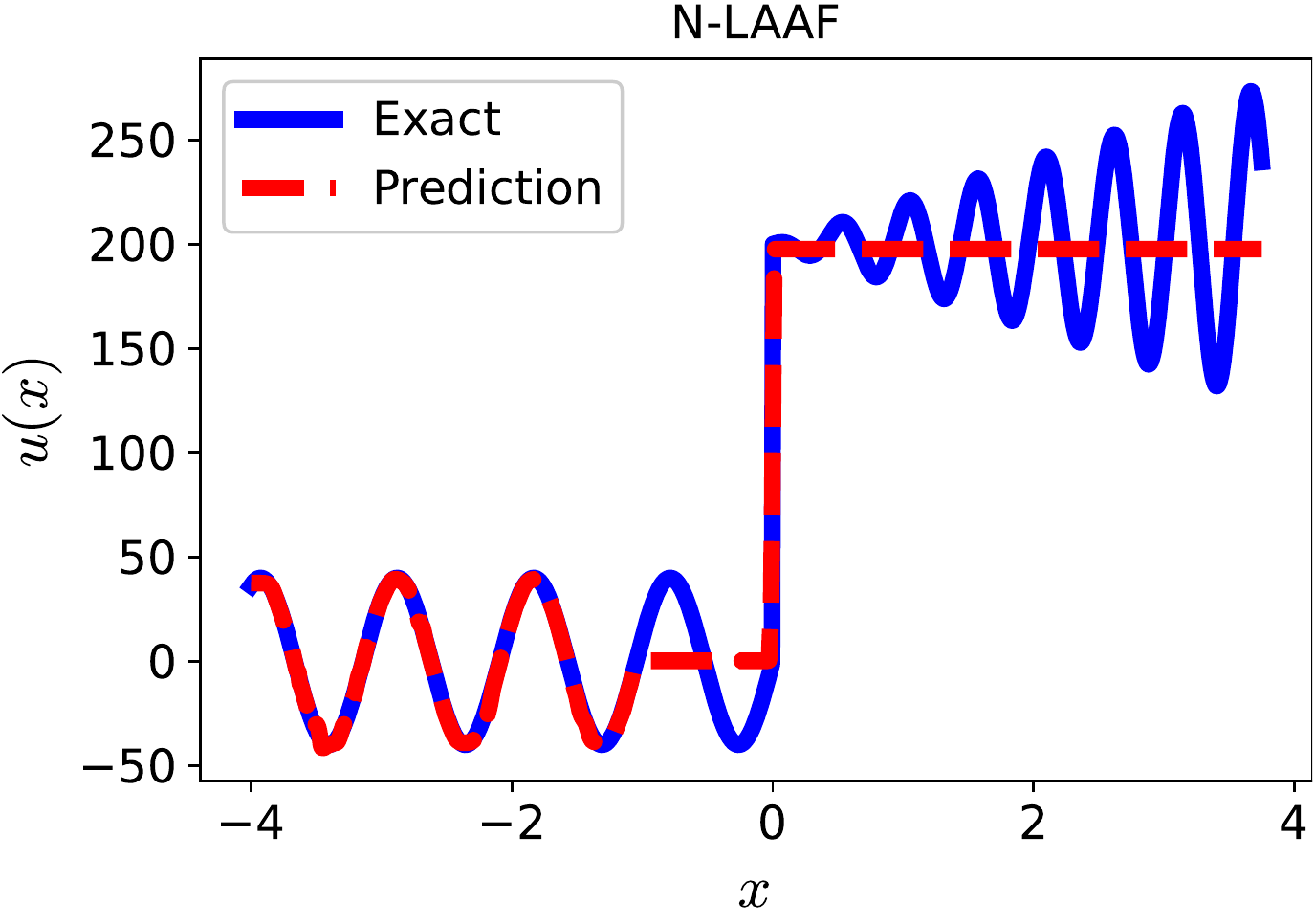}} \label{subfig: discontinuous function-N-LAAF}  }
\subfloat[Neural network solution of tanh]{%
\resizebox*{5cm}{!}{\includegraphics{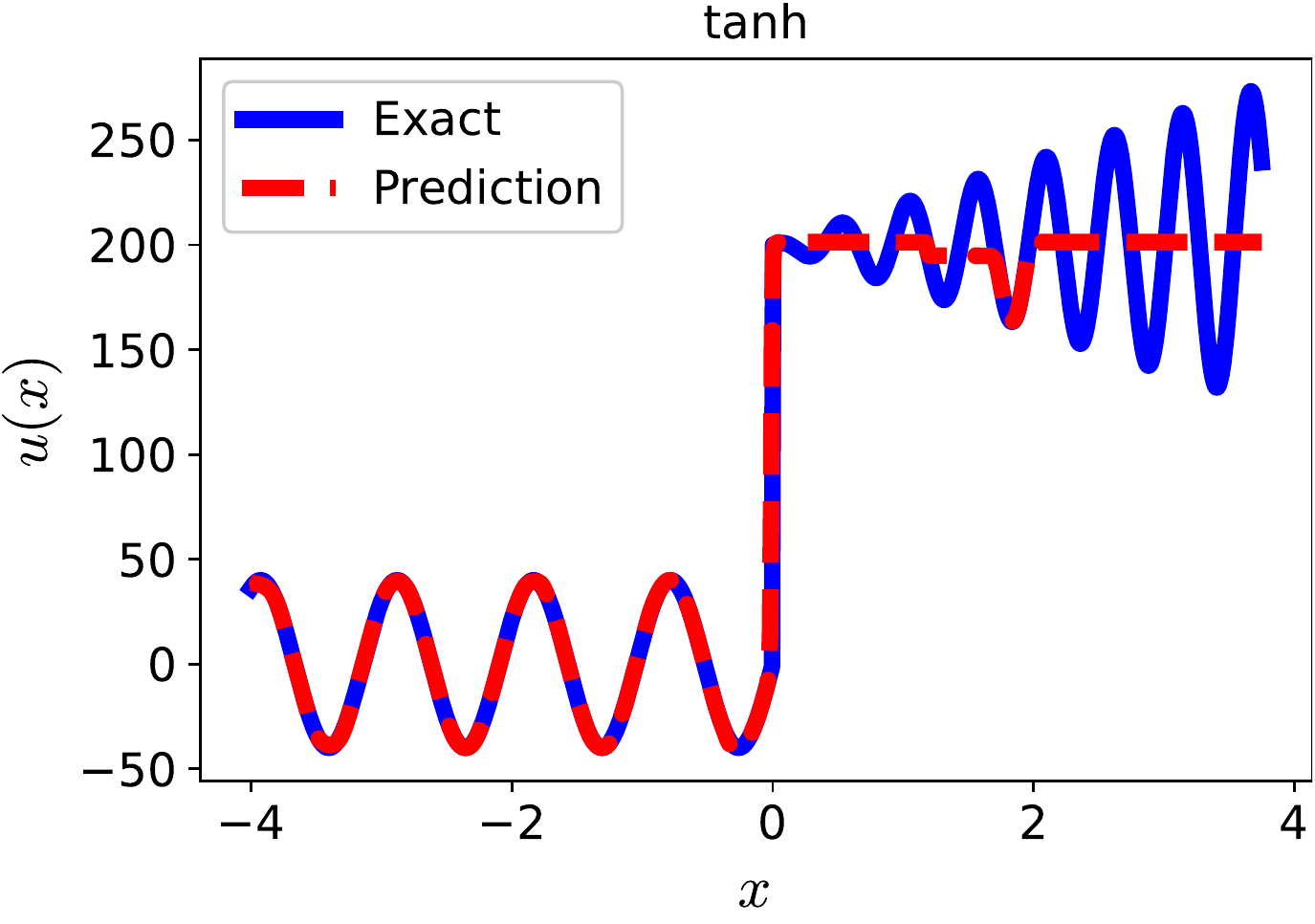}} \label{subfig: discontinuous function-tanh} }
\caption{Results of the discontinuous function: (a) Empirical convergence of training loss; (b) Empirical convergence of $\beta^i_k$; (c) Neural network solution of the proposed method; (d) Neural network solution of N-LAAF; (e) Neural network solution of tanh.}
\label{fig: performance of discontinuous function}
\end{figure}
 The results are summarized in Figures \ref{fig: performance of smooth function} and \ref{fig: performance of discontinuous function}.  Figure \ref{subfig: smooth function-training loss} shows the training performance of the smooth function with the proposed activation compared to benchmark methods. After 10,000 epochs, the proposed method shows significantly better training loss than other benchmark methods.  The convergence of $\beta_k^i$ in one of the randomly chosen neurons is shown in Figure \ref{subfig: smooth function-beta}. For that particular $\beta^i_k$, the value converges to around $1.6$ with almost a similar rate as the training loss.
 
 Figures \ref{subfig: smooth function-proposed}, \ref{subfig: smooth function-N-LAAF} and \ref{subfig: smooth function-tanh} show the model prediction of different methods in the  testing phase.  The proposed method is the only one that can perfectly fit the function in Eq.~\eqref{eq: smooth function expression}, while other methods have a poor fitting.  Figure \ref{subfig: discontinuous function-training loss} similarly shows the training performance of the discontinuous function for 20,000 epochs. The convergence of $\beta_k^i$ (to around $1.25$) in one of the neurons is also shown in Figure \ref{subfig: discontinuous function-beta}. Figures \ref{subfig: discontinuous function-proposed}, \ref{subfig: discontinuous function-N-LAAF} and \ref{subfig: discontinuous function-tanh} show that our proposed method performs best for model prediction  among all methods.  

\begin{table}[htbp!] 
\centering
\caption{Test performance of  function approximation for different methods in terms of MSE and RE.}\vspace{+0.4cm}
\begin{tabular}{lrrrrrr} 
\toprule
              & \multicolumn{2}{c}{tanh}                         & \multicolumn{2}{c}{N-LAAF}                     & \multicolumn{2}{c}{Proposed}                      \\ 
\cmidrule(lr){2-3}\cmidrule(lr){4-5}\cmidrule(lr){6-7}
              & \multicolumn{1}{c}{MSE} & \multicolumn{1}{c}{RE} & \multicolumn{1}{c}{MSE} & \multicolumn{1}{c}{RE} & \multicolumn{1}{c}{MSE} & \multicolumn{1}{c}{RE}  \\ 
\midrule
Eq.~\eqref{eq: smooth function expression}        & 1369.60                 & 0.6249                 & 988.18                  & 0.5308                 & \textbf{185.46}                  & \textbf{0.2299}                  \\ 
\midrule
Eq.~\eqref{eq: discontinuous function expression} & 47.77                   & 0.1930                 & 37.32                   & 0.1706                 & \textbf{23.97}                   & \textbf{0.1367}                  \\ 
\bottomrule
\end{tabular}\label{tab: test loss summary 1}
\end{table}
In addition, the test performance  for different methods in terms of MSE and RE (relative error)~\citep{raissi2019physics, jagtap2020adaptive} is summarized in Table~\ref{tab: test loss summary 1}. The proposed method achieves the best performance for both functions in Eqs.~\eqref{eq: smooth function expression} and \eqref{eq: discontinuous function expression} in terms of MSE and RE. The better performance in these two cases can be attributed mainly to the second issue discussed in Section~\ref{subsec: Proposed Activation function}.

\subsection{Sensitivity Analysis}\label{subsec: sensitivity analysis}
This study is focused on the activation function and the proposed activation function has trainable parameters $\beta_k^i$s. For the case studies in~\ref{subsec: function approximation}, these values are initialized as 1. In this subsection, the sensitivity analysis is conducted concentrating on the initialization of these $\beta_k^i$s. Specifically, for both the functions used in Section~\ref{subsec: function approximation}, all $\beta_k^i$s are initialized from ten values, which are equally spaced in the range $[0.25,1.15]$. \begin{figure}[!htb]
\centering
\subfloat[Smooth Function]{%
\resizebox*{7cm}{!}{\includegraphics{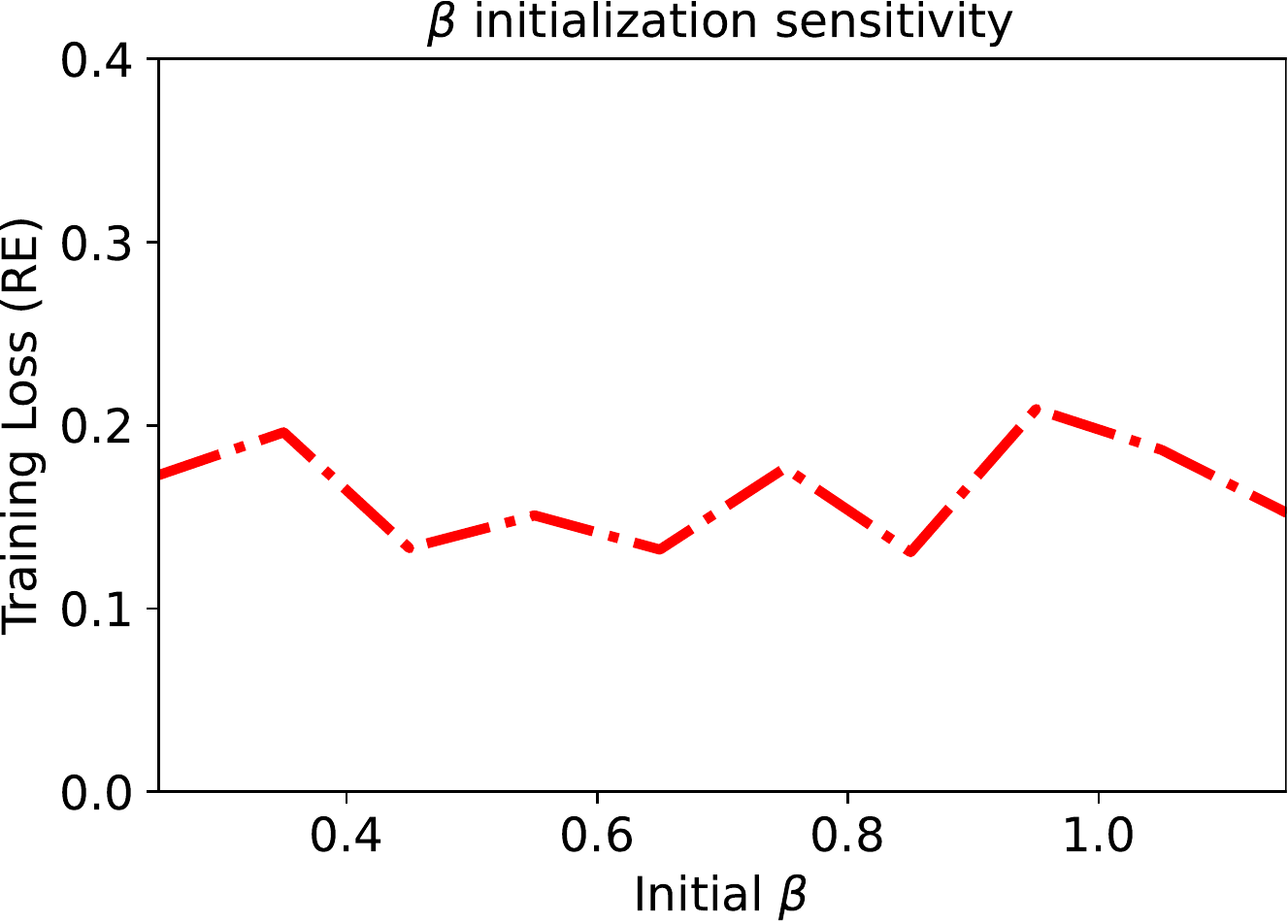}} \label{subfig: sensitivity smooth}  }\hspace{5pt}
\subfloat[Discontinuous Function]{%
\resizebox*{7.2cm}{!}{\includegraphics{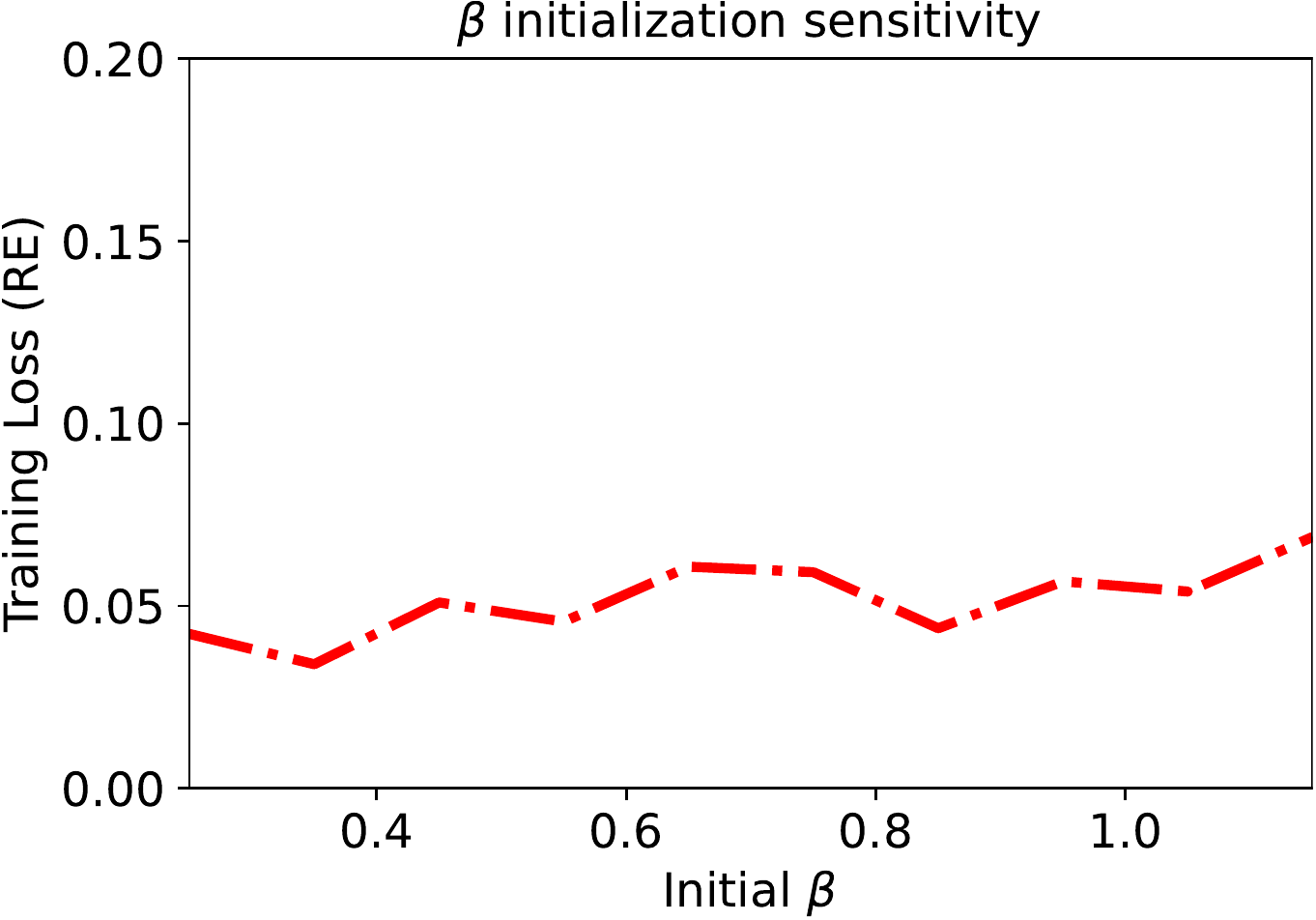}}  \label{subfig: sesnsitivity disc}}\\
\caption{Sensitivity of training under various initialization of $\beta^i_k$. Note that $\beta^i_k$s are initialized with the same value for all the neurons, $\beta$ is used to represent the value.} \label{fig: sensitivity analysis}
\end{figure} The training losses from the last epoch for both functions  (averaged over ten repetitions) in terms of RE are plotted in Figure~\ref{fig: sensitivity analysis} with different  initialization of $\beta_k^i$s. The results show that our proposed Stan is very stable training with different initialization of $\beta_k^i$s, which is particularly desired for more involved PINN case studies in Section~\ref{sec: PINN study}.

\section{Forward and Inverse Problems using PINN} \label{sec: PINN study}
In this Section, the proposed Stan activation is  tested on the multiple problems involving differential equations. These problems can be summarized into two types: (1) forward problem in Section~\ref{subsec: forward problem}, where the problem is to identify the solution of given a differential equation with its corresponding initial/boundary conditions (2) inverse problem in Section~\ref{subsec: inverse problem}, where the problem is to discover the coefficients of the differential equation given the solution. L-BFGS optimizer with the default learning rate 1 is used for all the cases in this section to evaluate the performance. All the results are repeated ten times with different initializations of NN weights and biases to obtain the average performance.

\subsection{Forward Problem} \label{subsec: forward problem}
\subsubsection{1-Dimensional Problems} \label{subsubsec: 1D problem}
As described in Section \ref{subsec: PINN}, given a specific form of the differential equation, the task of PINN is to solve the differential equation with the given initial/boundary conditions. The data provided to the NN is those on the boundary. A NN of nine hidden layers with 50 neurons in each layer is utilized.  The parameters $\beta^i_k$s are initialized with the value 1. Here, the first example considered is a second-order ODE as 
\begin{equation}\label{eq: 1D_1 differential equation}
    \begin{aligned}
              & \frac{d^2u}{dx^2} + \frac{du}{dx} = 6u,\,\, x  \in [0,2], \\
             & u(0) =2, \,\,  \frac{du}{dx}|_{x=0}= -1,
    \end{aligned}
\end{equation} 
where there is  a single point boundary (Dirichlet boundary) at the left end, namely, $x^1_u = 0$, and a first derivative boundary (Neumann boundary) at the same end. The differential equation residual $\mathcal{F}_{\bmTheta}$ needs to be zero for all points in the support. To enforce this condition, a set of 1,000 points ($x^i_f, i = 1,2,\dots, 1000$) are selected randomly for every training epoch.  Therefore, the loss is calculated by adding the loss of all the boundary conditions and the ODE residuals as in Eq.~\eqref{eq: 1D_1 Loss},
\begin{equation} \label{eq: 1D_1 Loss}
J(\tilde{\bmTheta}) =\frac{w_{\mathcal{F}}}{1000}\sum_{i=1}^{1000}\left| \mathcal{F}_{\tilde{\bmTheta}}(x_f^i) \right|^2 + w_u\left| u(0) - u_{\tilde{\bmTheta}}(0)\right|^2 + w_g \left| \mathcal{G}_{\tilde{\bmTheta}}(0)\right|^2,
\end{equation}
where $\mathcal{G}_{\tilde{\bmTheta}}(x^1_g) =  \frac{du_{\tilde{\bmTheta}}}{dx}|_{x=x^1_g} - \frac{du}{dx}|_{x=x^1_g}$ is an additional loss term corresponding to the Neumann boundary condition at $x^1_g = 0$ in Eq.~\eqref{eq: 1D_1 differential equation}.  $\mathcal{F}_{\tilde{\bmTheta}}(x_f^i)$ takes the following form
\begin{equation*} \label{eq: 1D_1 F and G}
      \mathcal{F}_{\tilde{\bmTheta}}(x^i_f) = \frac{d^2u_{\tilde{\bmTheta}}}{dx^2}|_{x=x^i_f} + \frac{du_{\tilde{\bmTheta}}}{dx}|_{x=x^i_f} -6u_{\tilde{\bmTheta}}(x^i_f), \,\, i = 1,2,\dots,1000.
\end{equation*}
For this case, the weights $w_f, w_u$, and $w_g$ are taken as $1$.

\begin{figure}[!htb]\vspace{-0.0cm}
\centering
\subfloat[Empirical convergence of training loss]{%
\resizebox*{6.8cm}{!}{\includegraphics{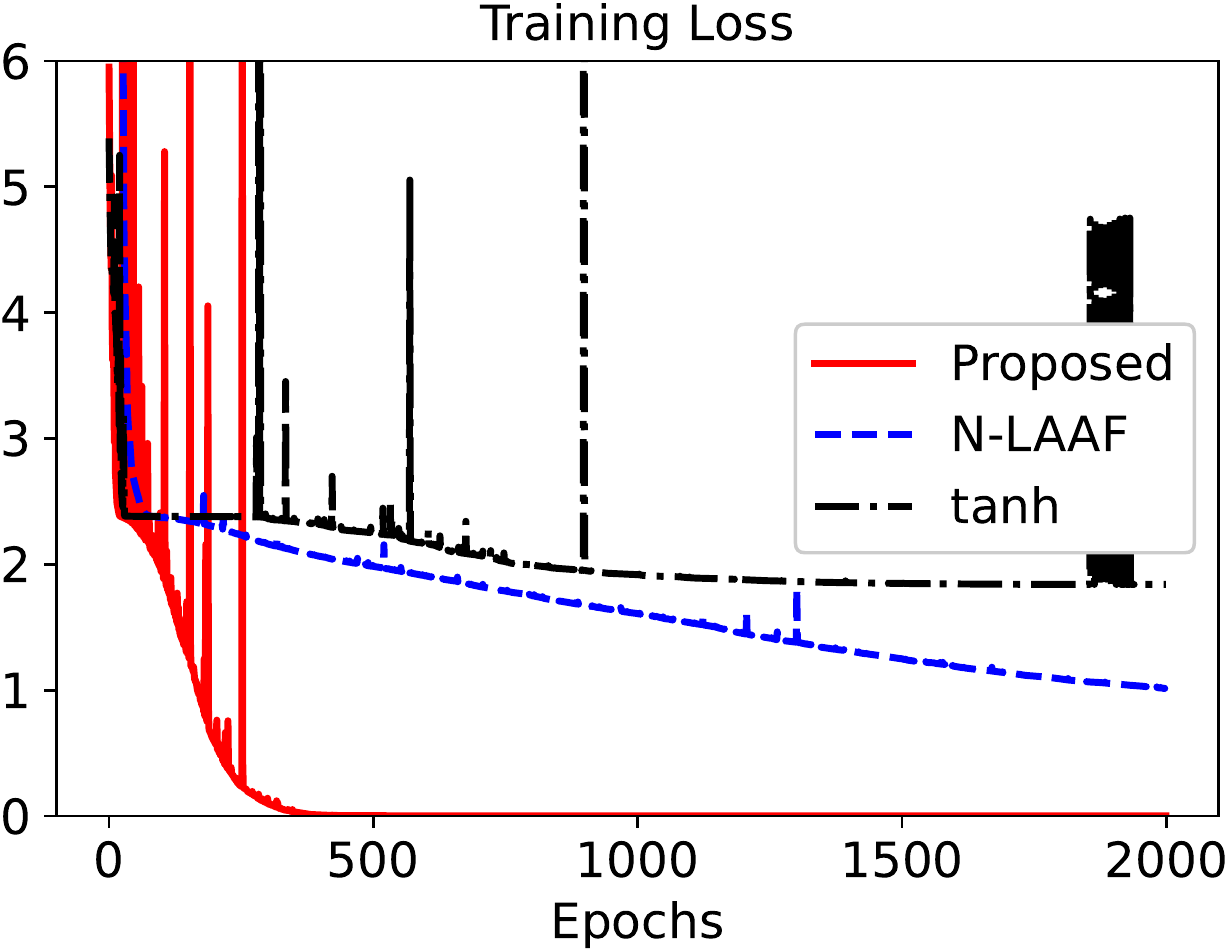}} \label{subfig: 1D-1-training loss} }\hspace{5pt}
\subfloat[Empirical convergence of $\beta^i_k$ in one of the neurons]{%
\resizebox*{7.8cm}{!}{\includegraphics{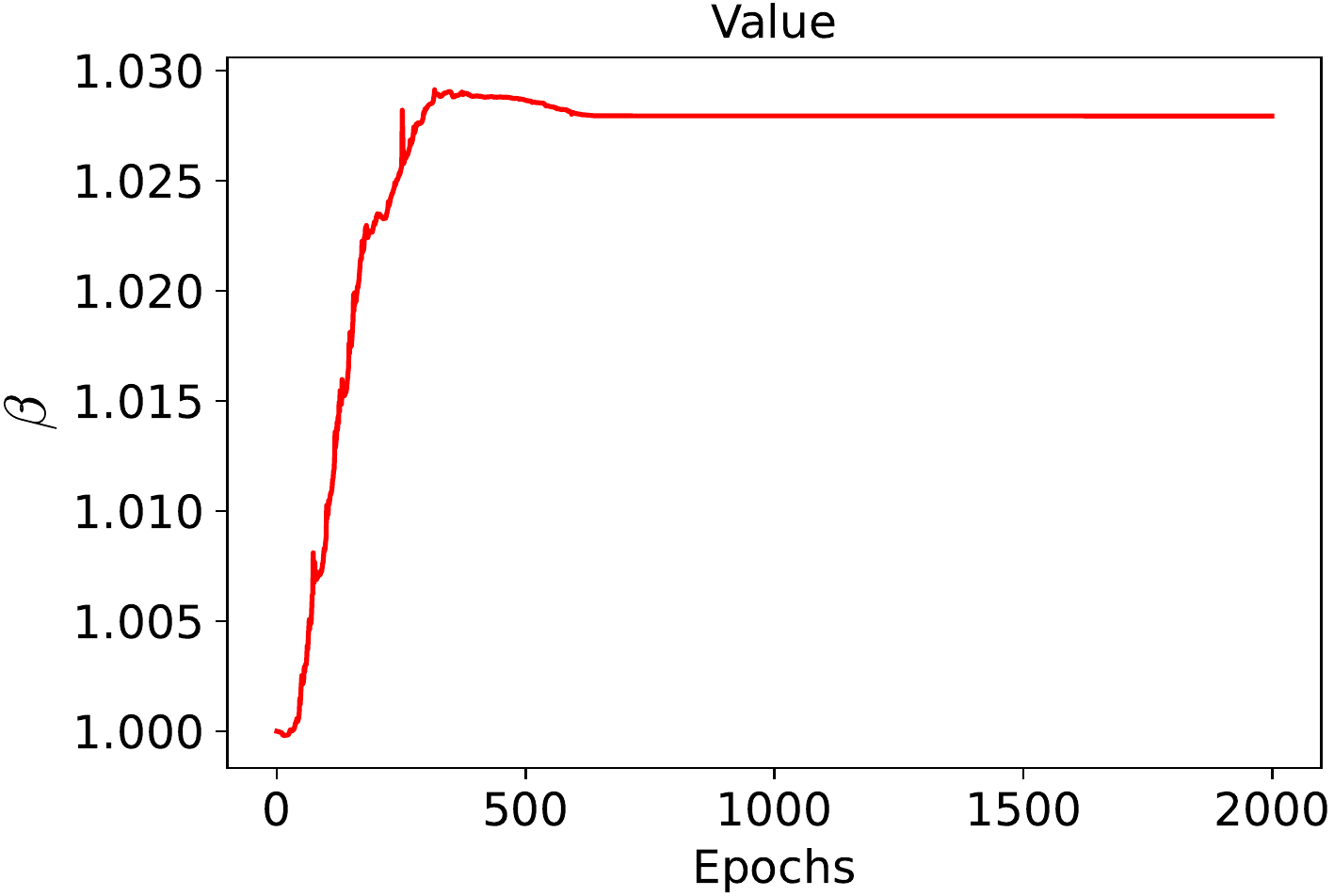}}  \label{subfig: 1D-1-beta} }\\
\subfloat[Neural network solution of the proposed method]{%
\resizebox*{5cm}{!}{\includegraphics{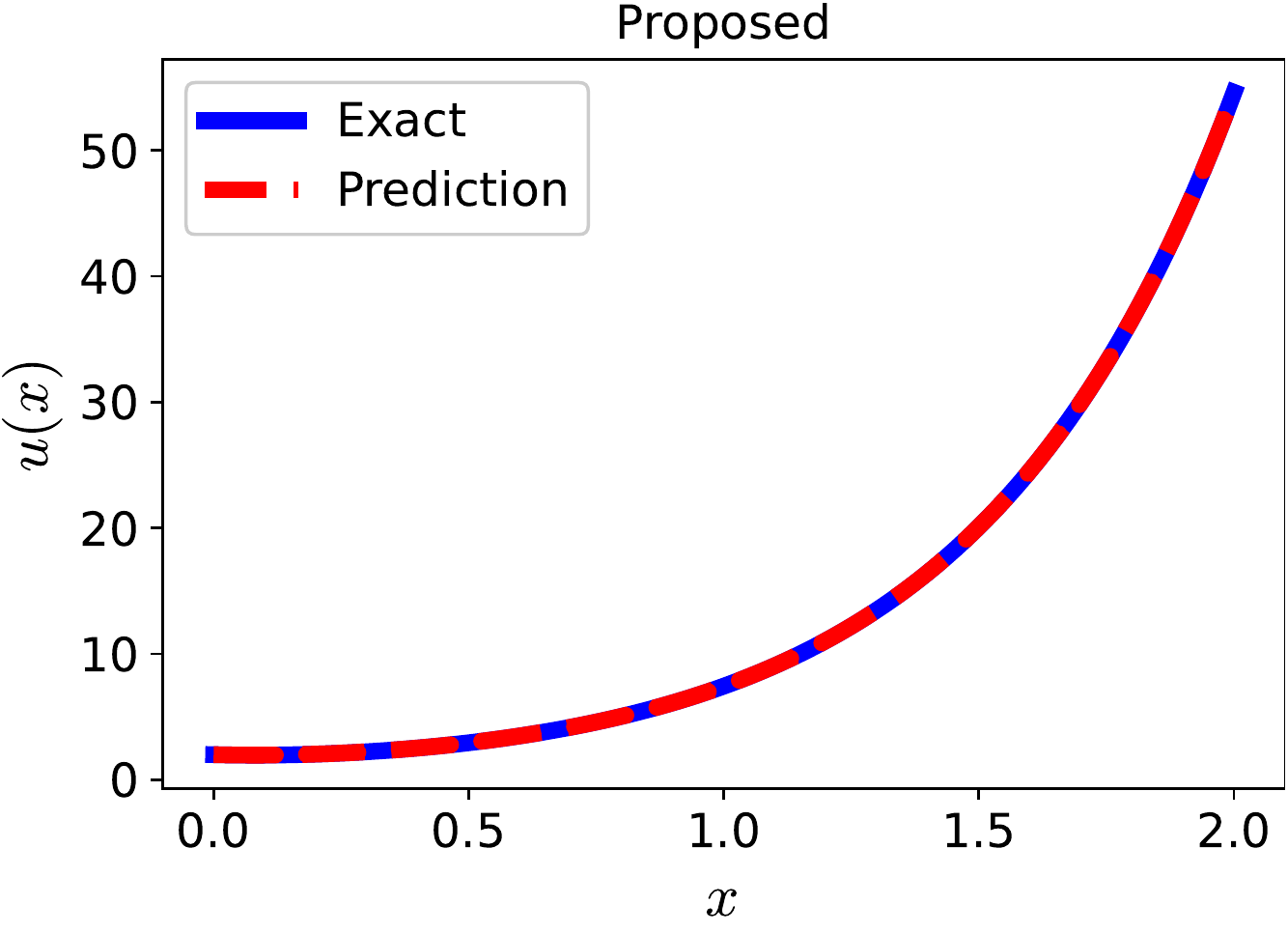}} \label{subfig: 1D-1-proposed} }
\subfloat[Neural network solution of N-LAAF]{%
\resizebox*{5cm}{!}{\includegraphics{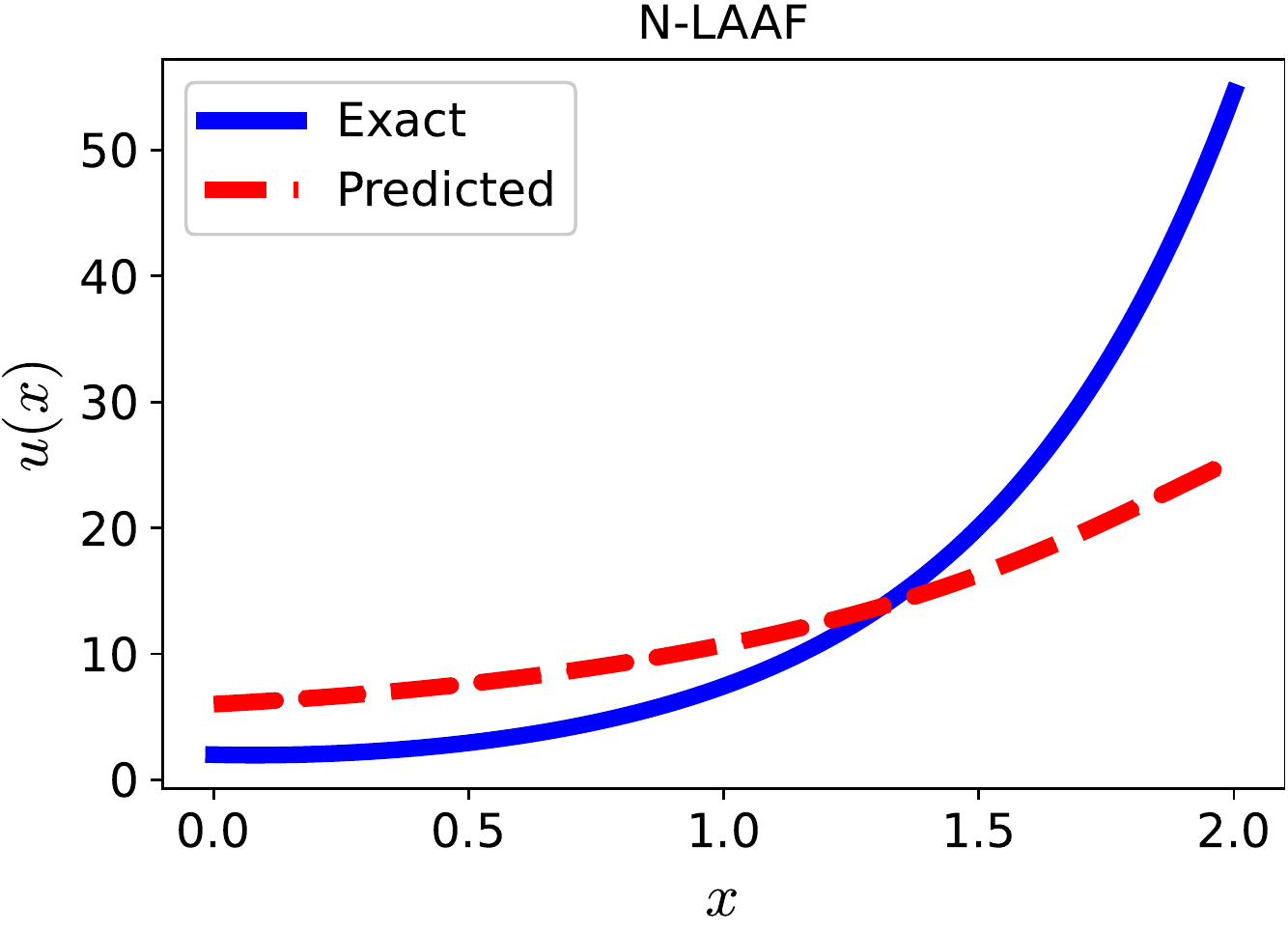}} \label{subfig: 1D-1-N-LAAF} }
\subfloat[Neural network solution of tanh]{%
\resizebox*{5cm}{!}{\includegraphics{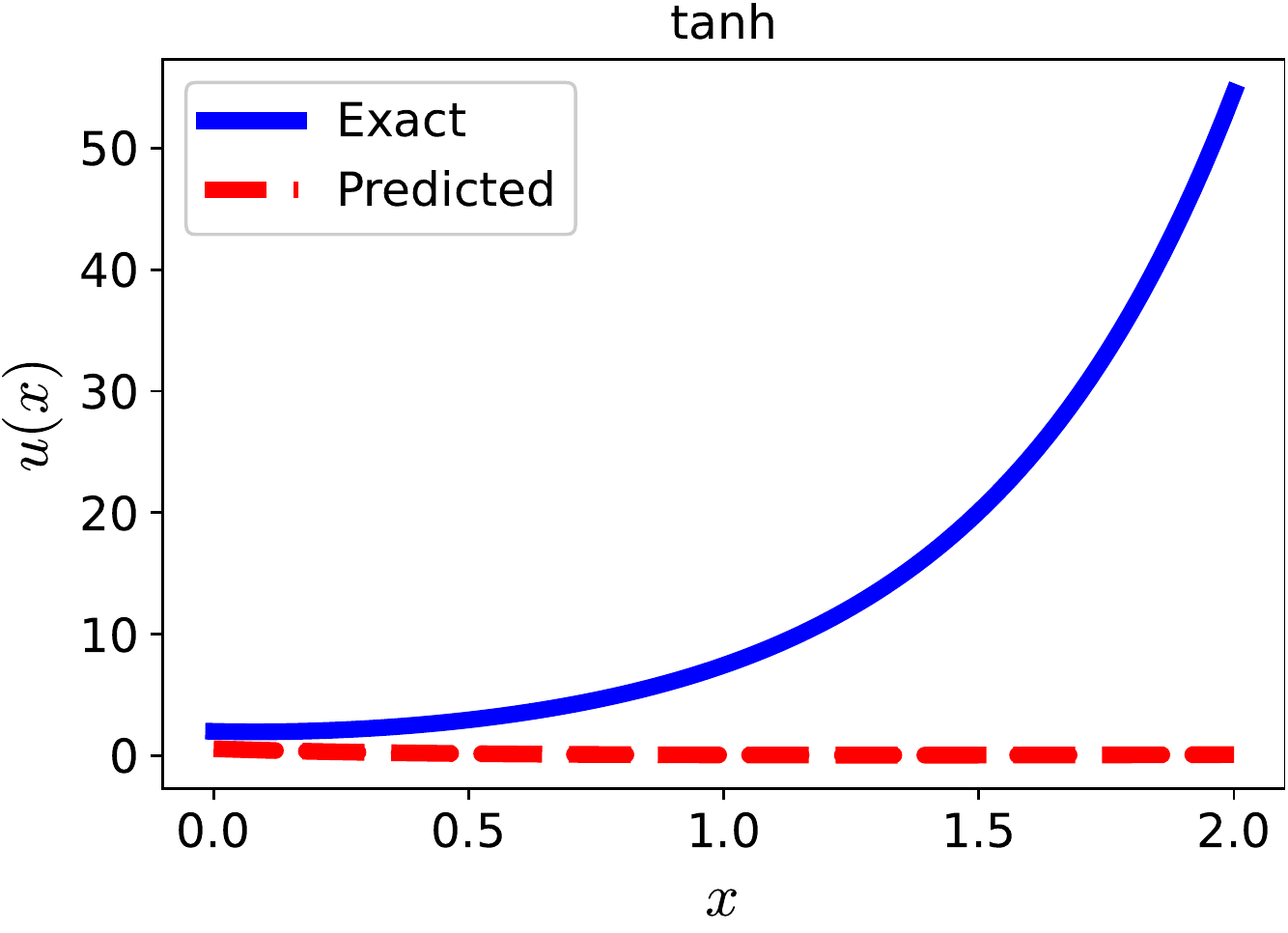}} \label{subfig: 1D-1-tanh} }
\caption{Results of Solving Eq.~\eqref{eq: 1D_1 differential equation}: (a) Empirical convergence of training loss; (b) Empirical convergence of $\beta^i_k$; (c) Neural network solution of the proposed method; (d) Neural network solution of N-LAAF; (e) Neural network solution of tanh.} \label{fig: performance of 1D-1}
\end{figure}
 As shown in Figure \ref{subfig: 1D-1-training loss}, the average loss in Eq.~\eqref{eq: 1D_1 Loss} converges to zero for the proposed activation function while other methods cannot.  The convergence of $\beta_k^i$ in one of the neurons is shown in Figure \ref{subfig: 1D-1-beta}.  It is well known that the problem in Eq.~\eqref{eq: 1D_1 differential equation} has a closed-form solution $u(x) = e^{2x} + e^{-3x}, \,\, x \in [0,2]$. The results of the neural network solution of Eq.~\eqref{eq: 1D_1 differential equation} are summarized in Figures \ref{subfig: 1D-1-proposed}, \ref{subfig: 1D-1-N-LAAF}, and \ref{subfig: 1D-1-tanh}. Our proposed method is the only one that can achieve the exact solution. Though this problem is simple, one-dimensional, and analytically solvable, the PINNs with tanh and N-LAAF activation functions fail to give an accurate NN model for this function.  This is mainly due to the difference in scale between $x$, which is $[0,2]$, and $u$, which is approximately $[0,55]$. Also, note that, with the given problem in Eq.~\eqref{eq: 1D_1 differential equation}, there is no way to identify the scale of $u$. The activation function N-LAAF, though improves the training, does not predict an accurate solution either.

\begin{figure}[!htb]
\centering
\subfloat[Empirical convergence of training loss]{%
\resizebox*{7cm}{!}{\includegraphics{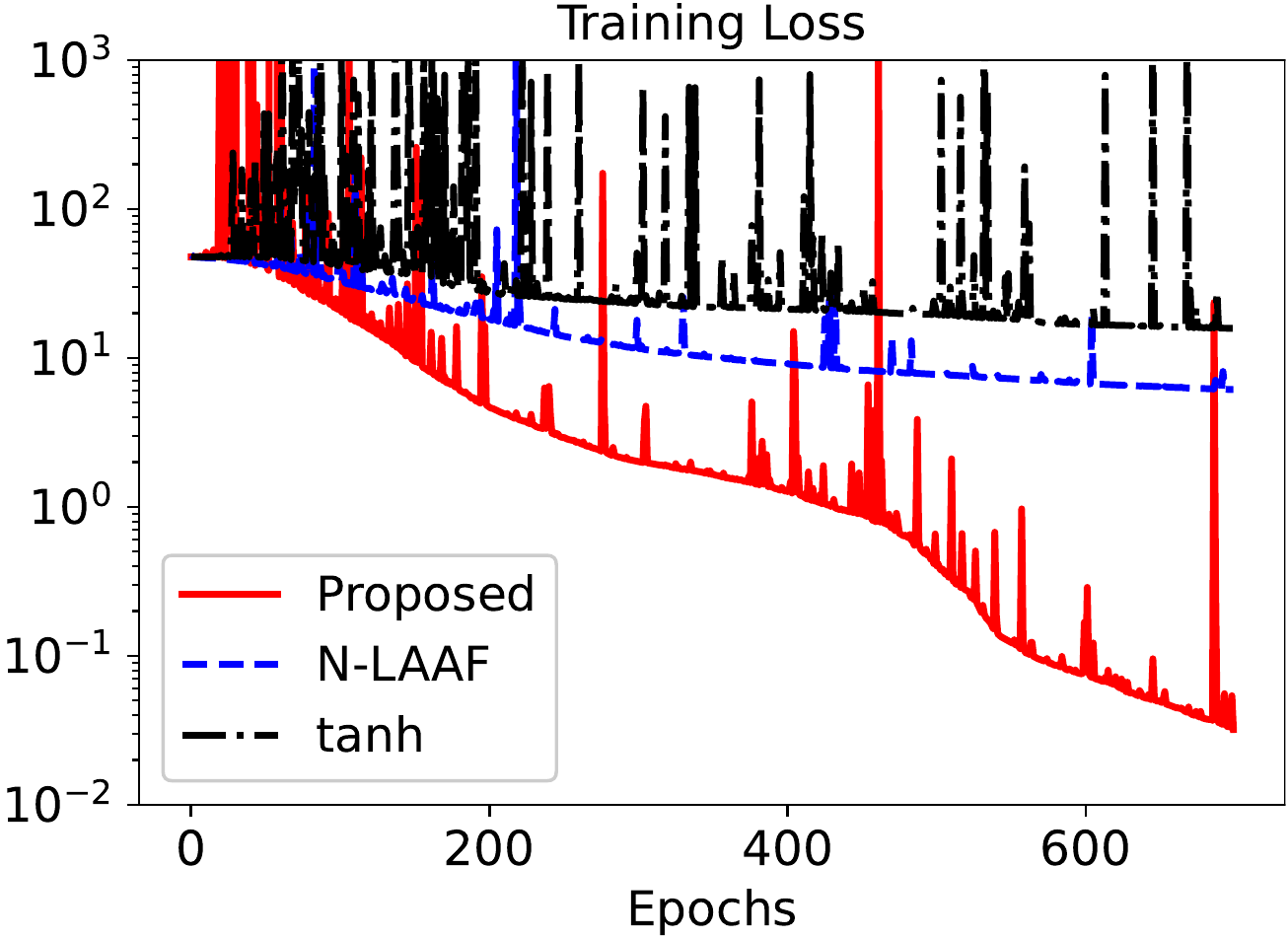}} \label{subfig: 1D-2-training loss}}\hspace{5pt}
\subfloat[Empirical convergence of $\beta^i_k$ in one of the neurons]{%
\resizebox*{7.4cm}{!}{\includegraphics{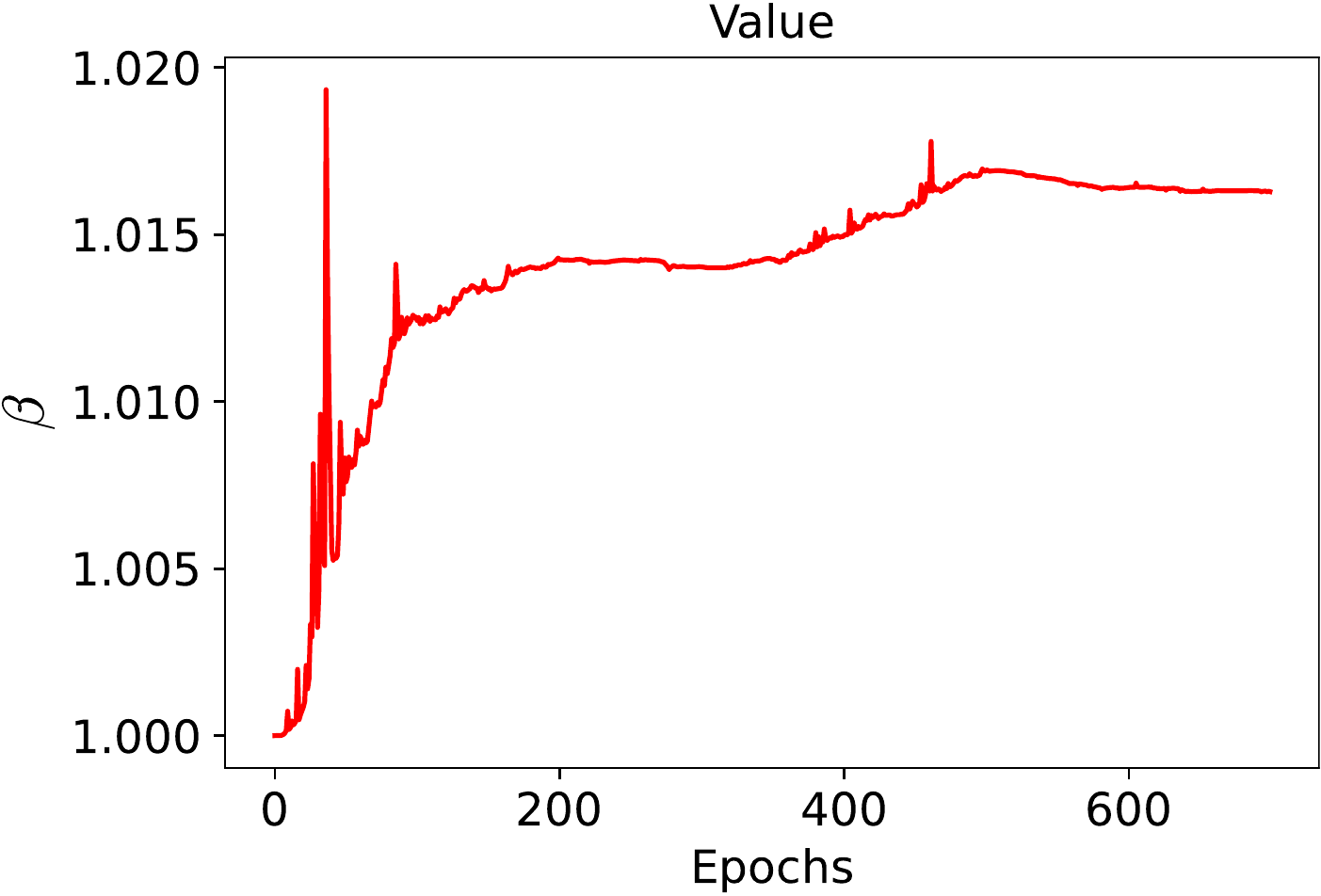}}  \label{subfig: 1D-2-beta} }\\
\subfloat[Neural network solution of the proposed method]{%
\resizebox*{5cm}{!}{\includegraphics{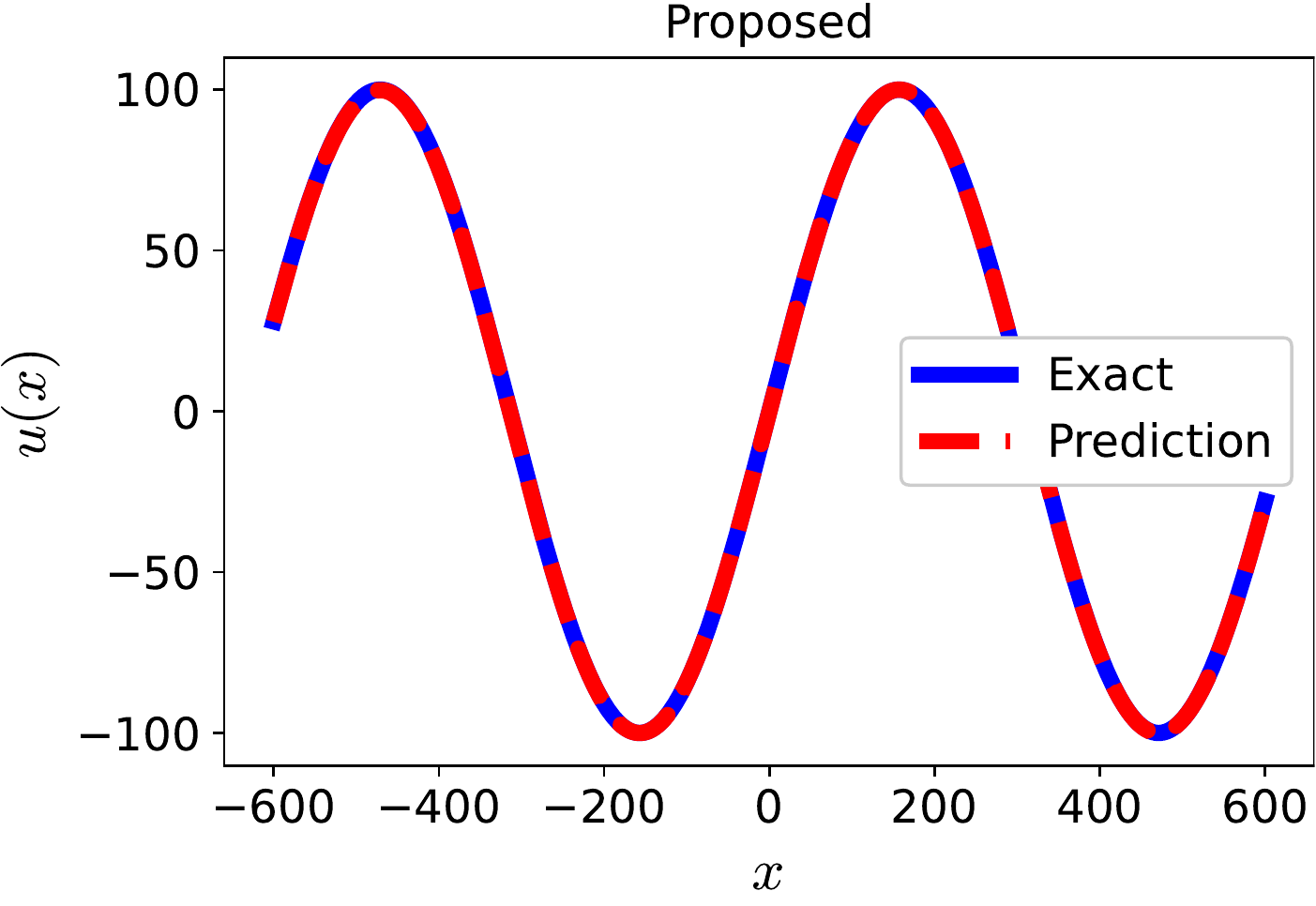}} \label{subfig: 1D-2-proposed} }
\subfloat[Neural network solution of N-LAAF]{%
\resizebox*{5cm}{!}{\includegraphics{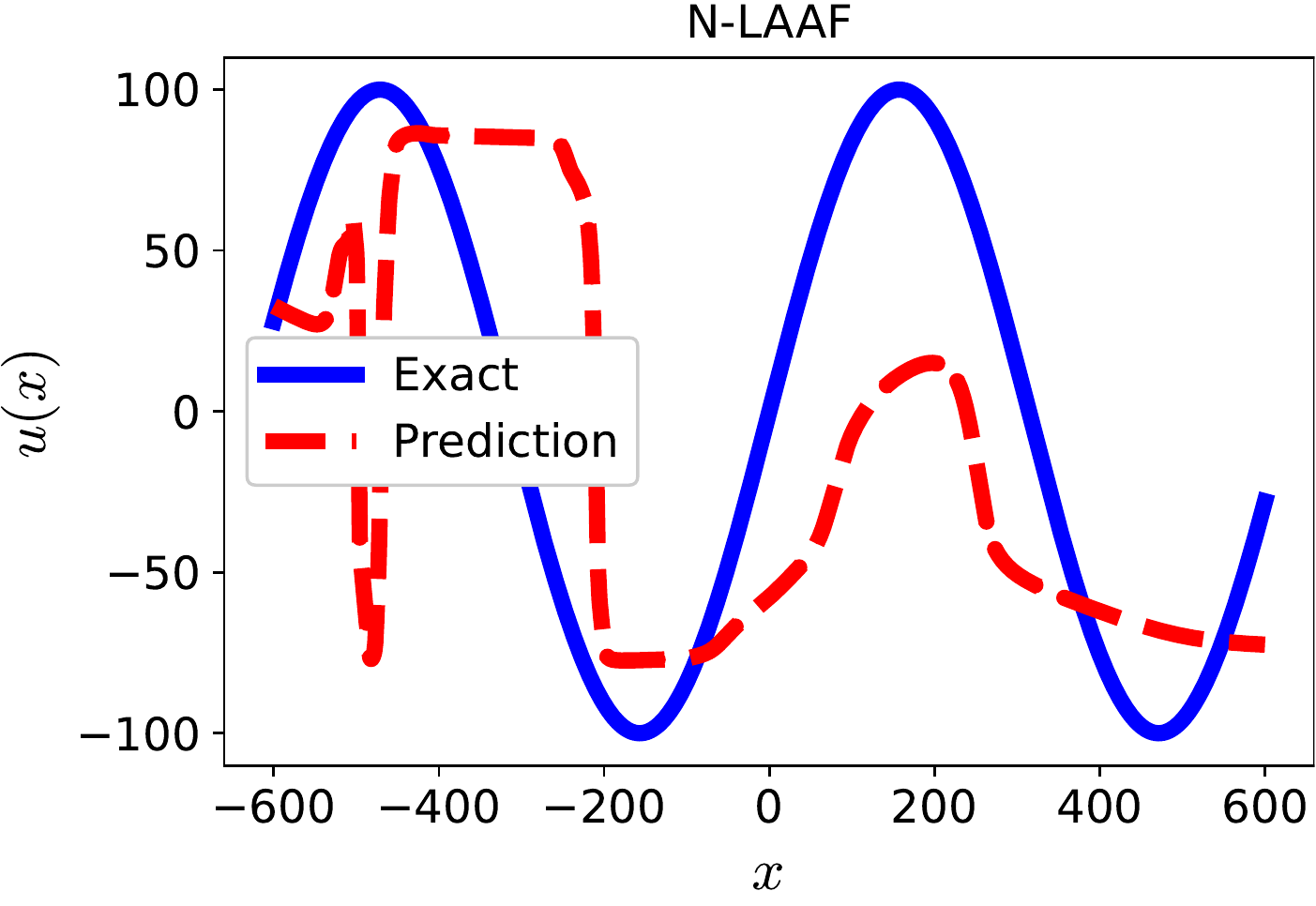}} \label{subfig: 1D-2-N-LAAF} }
\subfloat[Neural network solution of tanh]{%
\resizebox*{5cm}{!}{\includegraphics{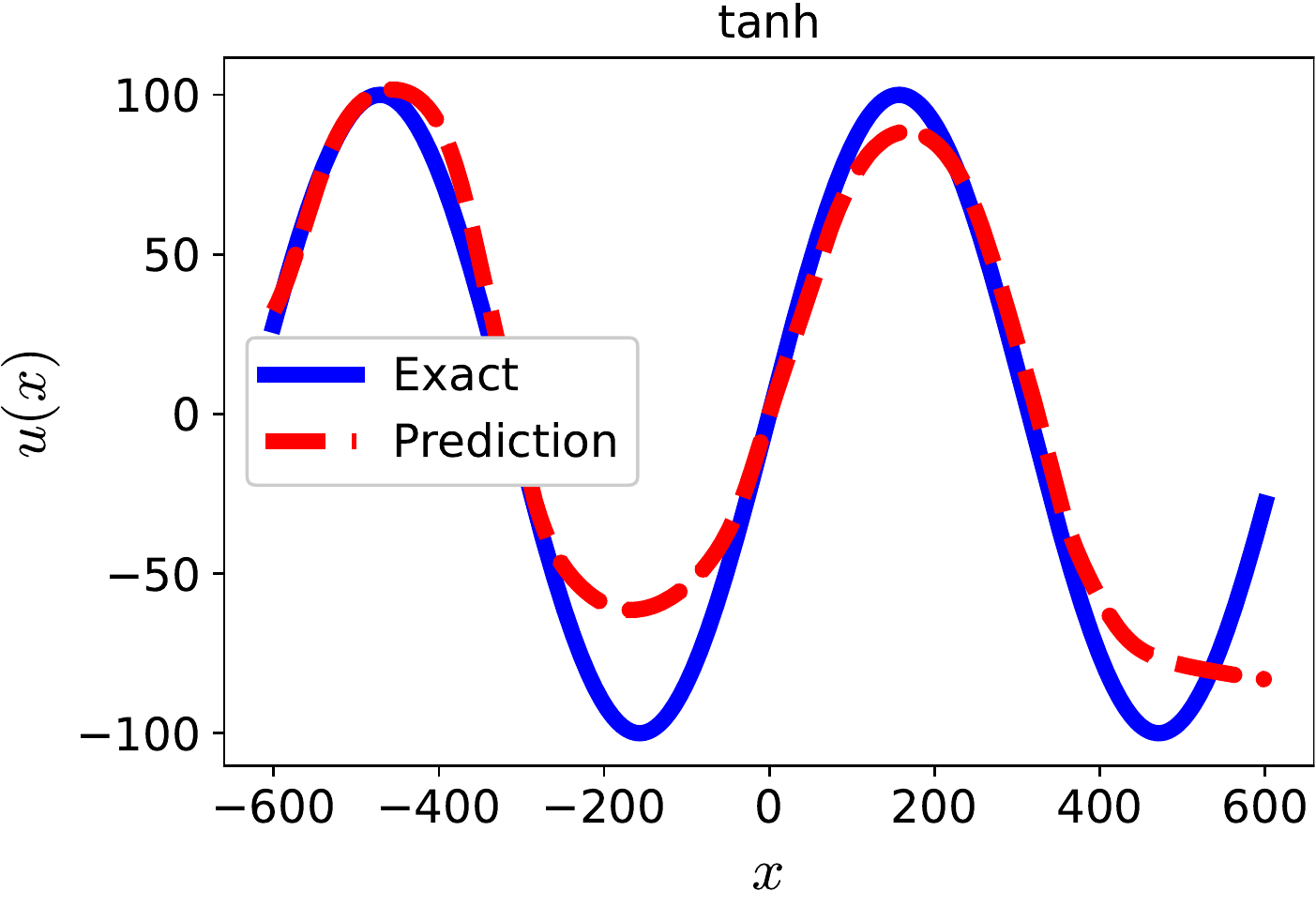}}  \label{subfig: 1D-2-tanh} }
\caption{Results of solving Eq.~\eqref{eq: 1D_2 differential equation}: (a) Empirical convergence of training loss; (b) Empirical convergence of $\beta^i_k$; (c) Neural network solution of the proposed method; (d) Neural network solution of N-LAAF; (e) Neural network solution of tanh.} \label{fig: performance of 1D-2}
\end{figure}

The second 1D example considered is inspired by the motivating example by \cite{moseley2021finite}. \cite{moseley2021finite} used a domain decomposition-based approach to address the spectral bias in the PINN methods. Unlike the first case, this is a first-order ODE with a single boundary condition as below
\begin{align}\label{eq: 1D_2 differential equation}
\begin{split}
 &\frac{du}{dx} = \cos(\omega x), \,\, x \in [-600,600], \\
 & u(0) = 0. 
\end{split}
\end{align}
Specifically, the problem is considered with very low frequency ($\omega = 0.01$). The loss function follows the formulation presented in Eq.~\eqref{eq: final loss function} as restated for this problem  as 
\begin{equation} \label{eq: 1D_2 Loss}
J(\tilde{\bmTheta}) =\frac{w_{\mathcal{F}}}{10000}\sum_{i=1}^{1000}\left| \mathcal{F}_{\tilde{\bmTheta}}(\bm{x}_f^i) \right|^2 + w_u\left| u(0) - u_{\tilde{\bmTheta}}(0)\right|^2, 
\end{equation}
where $\mathcal{F}_{\tilde{\bmTheta}}(x^i_f) =\frac{du_{\tilde{\bmTheta}}}{dx}|_{x^i_f} - \cos(0.01 x^i_f) $. The weights $w_{\mathcal{F}}$ and $w_u$ are chosen as 100 and 1, respectively, and 10,000 residual points are chosen randomly for every epoch. 

The problem in Eq.~\eqref{eq: 1D_2 differential equation} has the solution of the form $u(x) = 100\sin(0.01 x)$, where the magnitude of  $u$ is in the order of $10^2$.   Figure \ref{fig: performance of 1D-2} shows the algorithm convergence  and prediction performance of the proposed method in comparison with the benchmark methods. As shown in Figure \ref{subfig: 1D-2-training loss},  the proposed activation function is the only method that can achieve zero  average loss in Eq.~\eqref{eq: 1D_2 Loss}. In Figure~\ref{subfig: 1D-2-proposed}, the neural network solution of the proposed method can exactly match the solution. As seen in Figures \ref{subfig: 1D-2-N-LAAF} and \ref{subfig: 1D-2-tanh}, N-LAAF is the worst  in  predicting the solution even though its training loss (as shown in Figure \ref{subfig: 1D-2-training loss}) is better than the tanh activation. 

\begin{figure}[!htb]
\centering
\resizebox*{10cm}{!}{\includegraphics{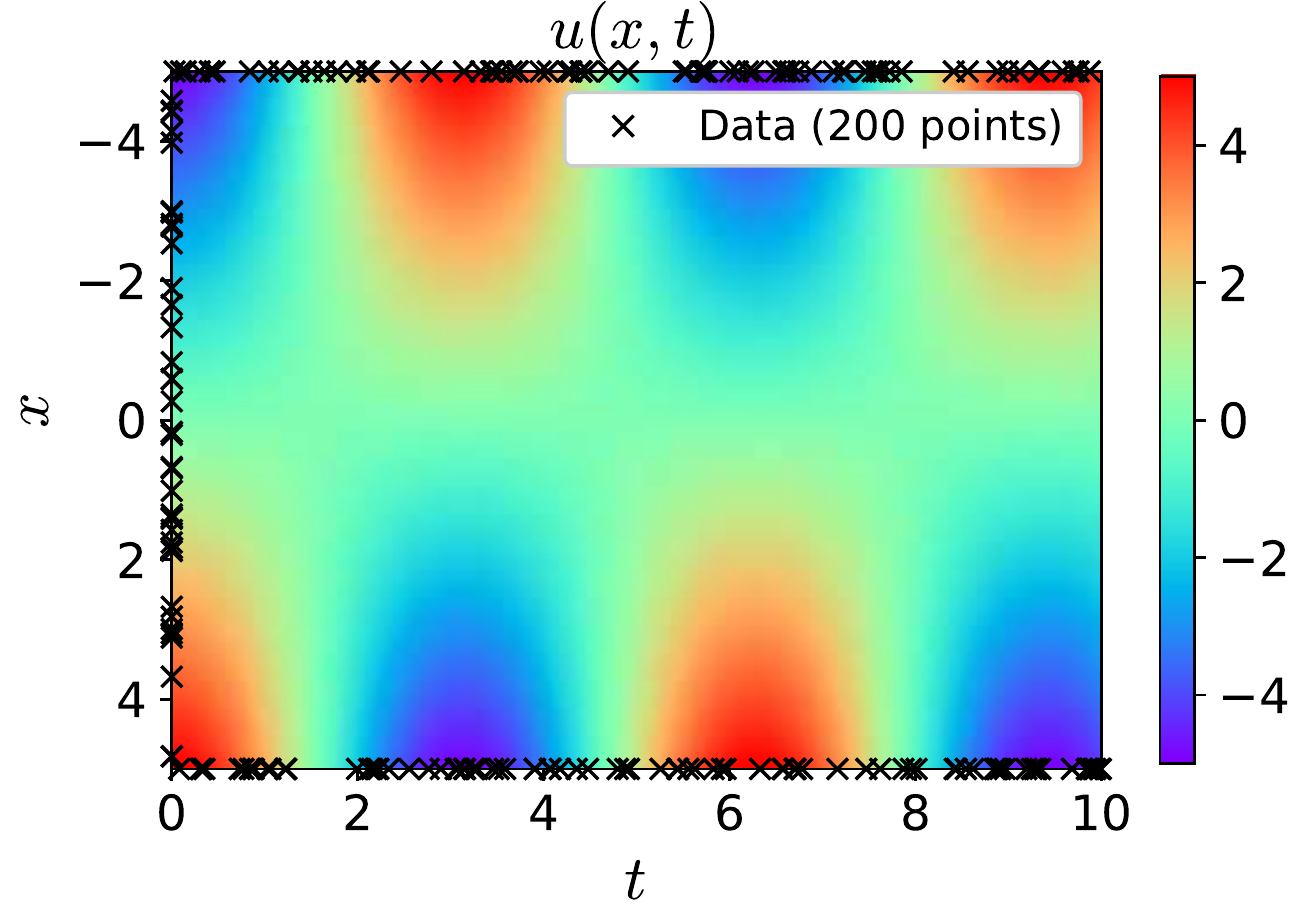}} \vspace{-0.5cm}
\caption{Ground truth solution of the Klein-Gordon equation. 200 ``$\times$''s are shown as training data for visualization purpose.} \label{fig: ground truth solution Klein-Gordon}
\end{figure}
\begin{figure}[!htbp]
\centering
\subfloat[Empirical convergence of training loss]{%
\resizebox*{7cm}{!}{\includegraphics{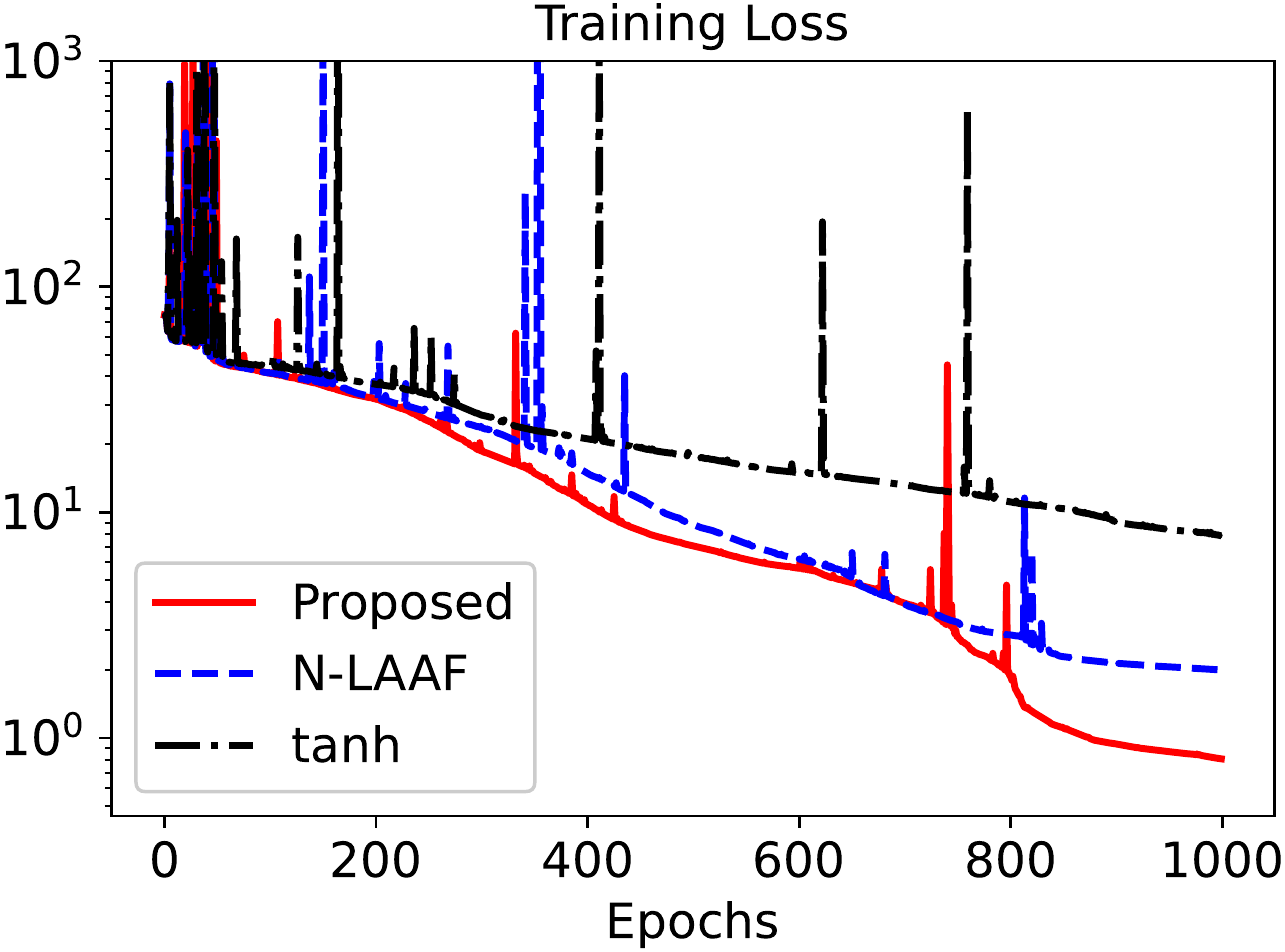}} \label{subfig: Klein-Gordon-training loss} }\hspace{5pt}
\subfloat[Empirical convergence of $\beta^i_k$ in one of the neurons]{%
\resizebox*{7.5cm}{!}{\includegraphics{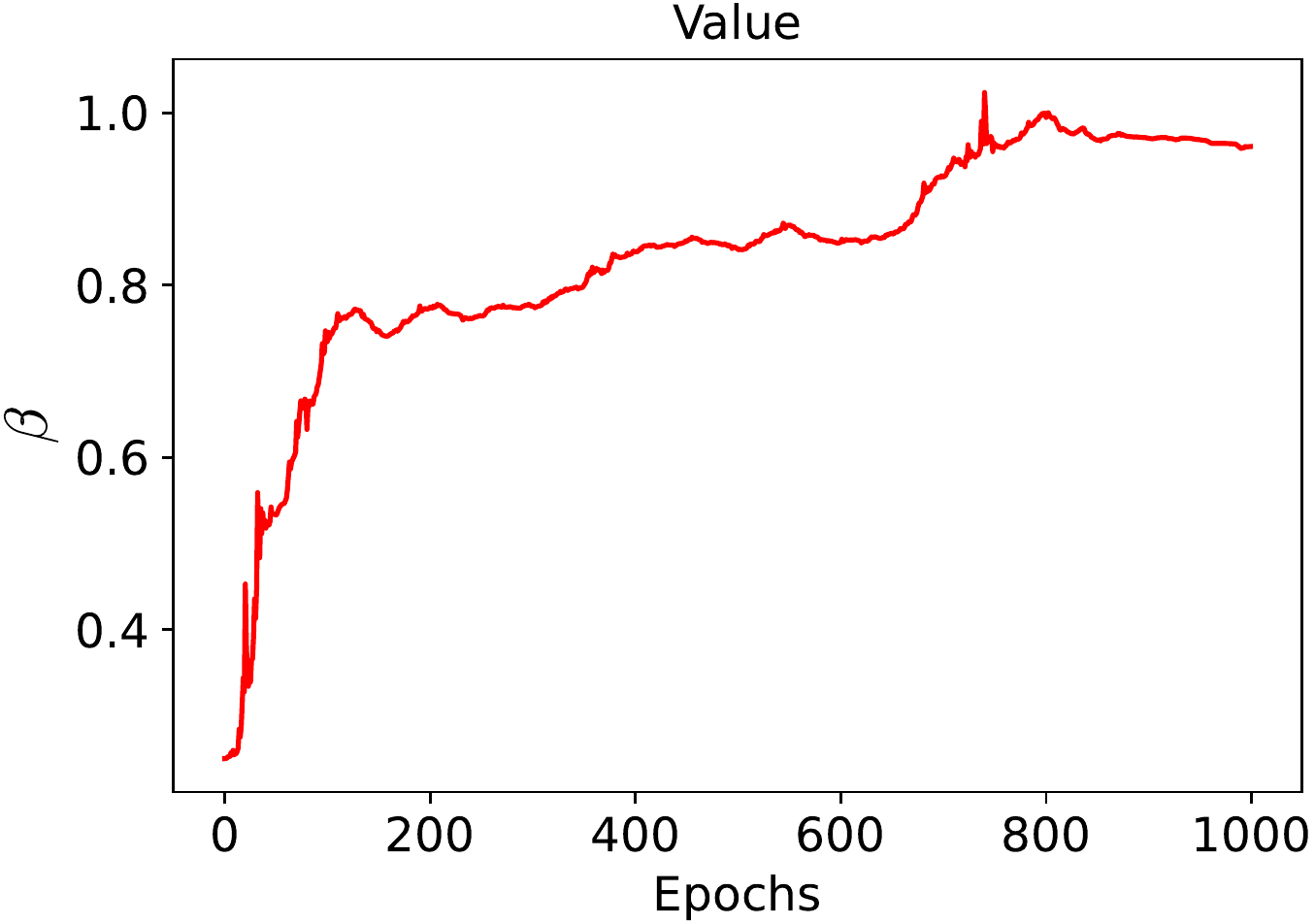}}  \label{subfig: Klein-Gordon-beta} }\\
\subfloat[Neural network solutions of the proposed method, N-LAAF, and tanh]{%
\resizebox*{15cm}{!}{\includegraphics{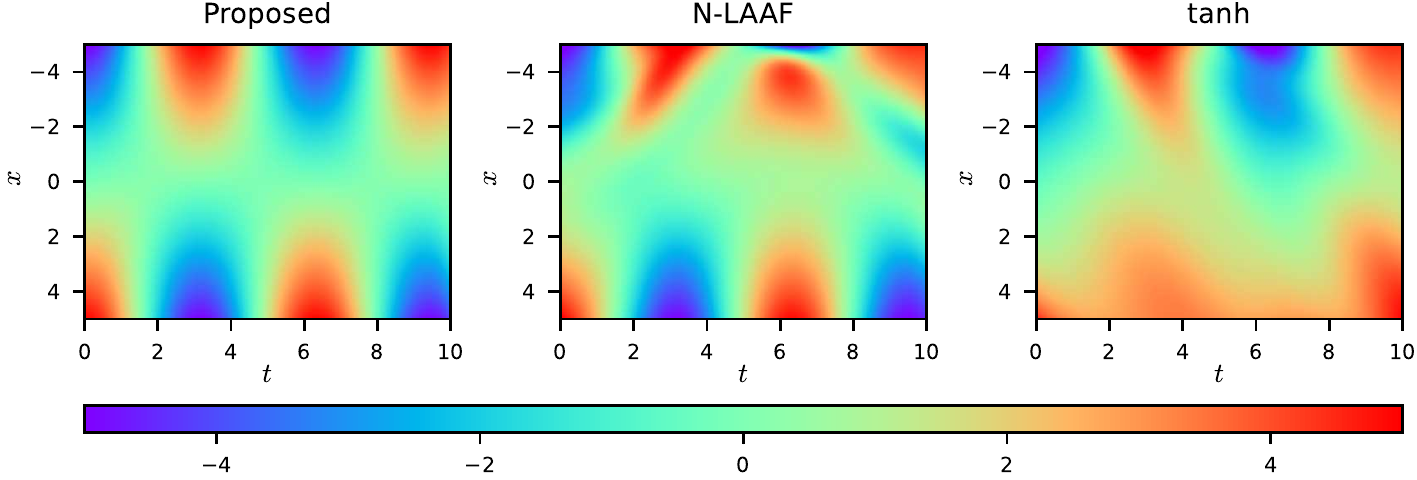}} \label{subfig: Klein-Gordon-solution} }\\
\subfloat[Absolute error maps for neural network solutions of the proposed method, N-LAAF, and tanh]{%
\resizebox*{15cm}{!}{\includegraphics{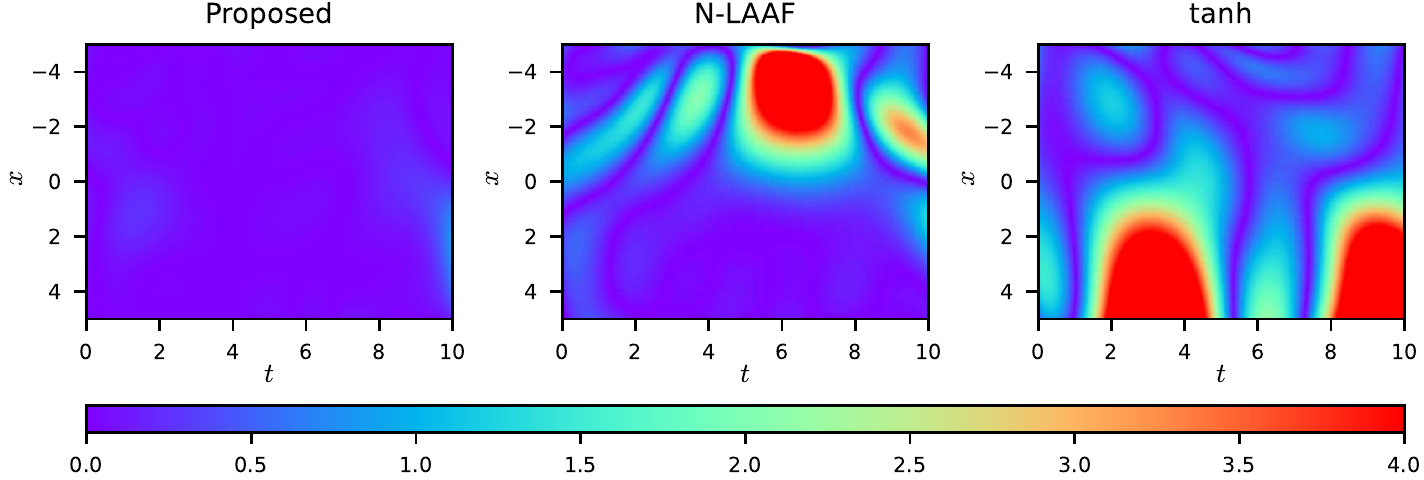}} \label{subfig: Klein-Gordon-errorMap} }  \caption{Results of  the Klein-Gordon equation: (a) Empirical convergence of training loss; (b) Empirical convergence of $\beta^i_k$; (c) Neural network solutions of the proposed method, N-LAAF, and tanh; (d) Neural network solution of N-LAAF; (e) Absolute error maps for neural network solutions of the proposed method, N-LAAF, and tanh.} \label{fig: performance of Klein-Gordon}
\end{figure}
\subsubsection{2-Dimensional Problem} \label{subsubsec: 2D problem}
As described in Section \ref{sec: proposed activation}, the Klein-Gordon equation is a second-order partial differential equation \citep{caudrey1975sine} arising in many branches of physics. The inhomogeneous Klein-Gordon equation, as given in Eq.~\eqref{eq: Klein-Gordon}, is used in this case study with the conditions given in~\cite{jagtap2020adaptive} but for an extended domain (i.e., $x \in  [-5,5]$) and the same time interval (i.e., $t \in [0,10]$). Specifically, the exact problem considered is given by
\begin{align} \label{eq: 2D_1 differential equation}
\begin{split}
&\frac{\partial^2 u}{\partial t^2} -  \frac{\partial^2 u}{\partial x^2} + u^2 = -x\cos(t) + x^2\cos^2(t), \,\, x \in [-5,5], t>0, \\
&u(x,0) = x;\,\, u(-5,t) = -5\cos(t);\,\, u(5,t) = 5\cos(t). \\
\end{split}
\end{align}
The above problem in Eq.~\eqref{eq: 2D_1 differential equation} has a closed-form solution  given by $u(x) = x\cos(t), \,\, x \in [-5,5]$ as shown visually in Figure~\ref{fig: ground truth solution Klein-Gordon}.  For the purpose of training, 500 points are chosen randomly on the boundaries (comprising of all the points where $x=-5$ or $x=5$ or $t=0$) for the \texttt{Data loss} in Eq.~\eqref{eq: definition MSE_u}. 10,000 residual points are randomly chosen from the support domain for the \texttt{PDE loss} in Eq.~\eqref{eq: definition MSE_f}. The neural network architecture used for this case study consists of nine hidden layers with 50 neurons in each layer.  All $\beta_k^i$s  are initialized with 0.25 and weights $w_f, w_u$ are taken as 1. 


Figure~\ref{fig: performance of Klein-Gordon} summarizes the convergence and prediction performance of different methods. Figure~\ref{subfig: Klein-Gordon-training loss} shows that our approach has a similar convergence speed with N-LAAF for the first 800 epochs but converges to a better solution in the end. Figures \ref{subfig: Klein-Gordon-solution} and \ref{subfig: Klein-Gordon-errorMap} correspond to the neural network solutions and absolute error maps for different methods. It can be observed visually that the proposed method outperforms the benchmark methods.  On the other hand, N-LAAF has better convergence than the tanh activation. This case study illustrates that even without changing any coefficients/parameters of Klein-Gordon equation from~\cite{jagtap2020adaptive}, simply by extending the domain (from $ x\in [-1,1]$ to $x\in[-5,5])$, the output can be of higher magnitudes and the benchmark methods fail to scale up.

\subsection{Inverse Problem} \label{subsec: inverse problem}
Another significant problem related to PINN is the inverse problem, which is to find the differential equation itself with the given data \cite{raissi2019physics}. Specifically, it aims to find the coefficients/parameters of the differential equation, which  is also called the data-driven discovery of partial differential equations. This problem is encountered practically in many cases. For instance, sensor data can be used to calculate the material property or characteristic of a given work-piece~\citep{harrsion2020material, crawford2018advanced}. 

To demonstrate the effectiveness of the proposed activation function in the inverse problem,   the transient heat transfer in a rod is considered~\citep{cengel2008introduction}. The rod is considered one-dimensional with a length of one unit. It has a constant initial temperature of $u_0$ and starts to cool down at time $t=0$. The ends of the rod are assumed to be maintained at a temperature of zero units. Thermal effects except the conduction are ignored. Specifically, the conduction heat transfer equation~\citep{cengel2008introduction} given by the PDE with initial  and boundary conditions, where $u(x,t)$ is the temperature that follows
\begin{align} \label{eq: heat transfer equation}
    \begin{split}
        \texttt{PDE}&:\,\, \frac{\partial u}{\partial t} = \kappa \frac{\partial^2 u}{\partial x^2},\,\, 0<x<1, \\
        \texttt{Initial Condition}&:\,\, u(x,0) = u_0, \,\, 0<x<1,\\
        \texttt{Boundary Condition}&:\,\, u(0,t) = u(1,t) = 0, \,\, t>0,
    \end{split}
\end{align}
where the thermal diffusivity $\kappa$ is  a constant for a given rod. 
In this case, it is assumed that the $\kappa$ is unknown. The goal is \textbf{\textit{to find the thermal diffusivity of the rod (i.e., $\kappa$) based on the thermal measurement at specific points during the transition to equilibrium temperature}}. From the physics perspective and/or the PINN perspective, the objective is to identify the coefficient of the differential equation denoted as $\kappa$ (i.e., the thermal diffusivity)  given the approximate temperature data (i.e., $\bar{u}(x,t)$) spread over the entire domain. In addition, PINN can produce a better approximation (i.e., $u_{\bmTheta}(x,t)$)  of the temperature distribution (i.e., solution $u(x,t)$) for the rod as a function of spatial location $x$ and time $t$.

\begin{figure}[!htbp]
\centering
\resizebox*{10cm}{!}{\includegraphics{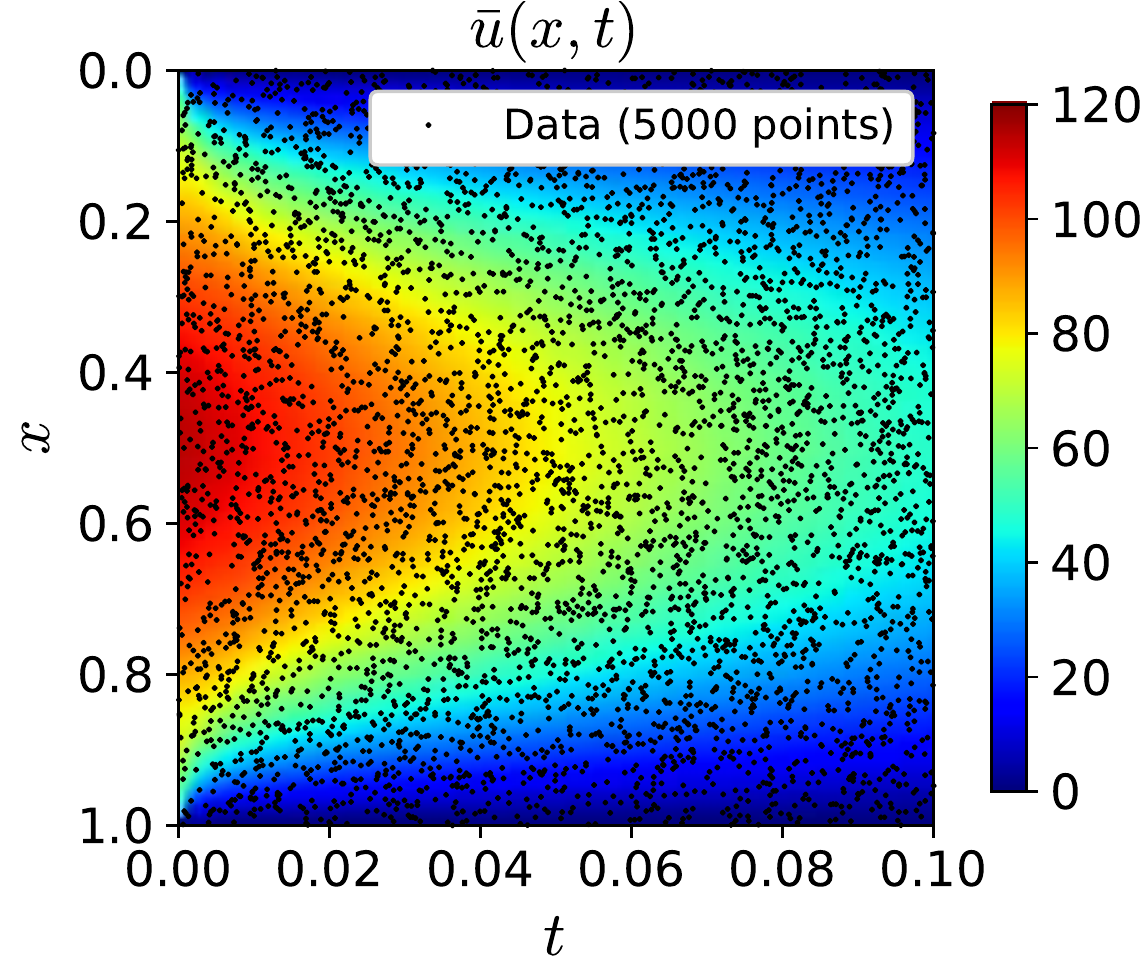}} \vspace{-0.5cm}
\caption{Approximate solution of the transient 1D Heat Transfer equation calculated by closed-form expression. The ``$\cdot$''s represent training data.} \label{fig: ground truth solution Transient 1D heat Transfer}
\end{figure}
To simulate this case,  the thermal diffusivity  $\kappa^* = 1$ is considered the \textbf{ground truth} and the initial temperature $u_0$ of the rod is set as 50 units. By choosing $\kappa^*=1$, it leads to sufficient cooling within $t = 0.1$ units. Therefore, the data within that time interval is sufficient to consider cooling. For this particular case, the analytical solution~\citep{cengel2008introduction} is given by an infinite series as \begin{equation}\label{eq: heat transfer solution}
    u(x,t) = \frac{4 u_0}{\pi} \sum_{i=1}^{\infty} \frac{\sin\Big((2i-1)\pi x\Big)}{2i-1} \exp{\Big(-(2i-1)^2 \pi^2 t\Big)}.
\end{equation} 
Choosing any finite number of terms from the infinite series results in an approximation $\bar{u}(x,t)$ of the actual function $u(x,t)$. In this work, the first 10,000 terms of the series were used to calculate the approximation $\bar{u}(x,t)$, which is accurate enough~\citep{cengel2008introduction}.  5,000 randomly chosen spatial-temporal locations from $0 \leq x \leq 1,0 \leq t \leq 0.1$, and their corresonding $\bar{u}(x,t)$ are treated as thermal measurement data for training. The visualization of the approximate solution and thermal measurement data is shown in Figure \ref{fig: ground truth solution Transient 1D heat Transfer}. 

To find the thermal diffusivity $\kappa$, the loss function is based on the one in Eq. \eqref{eq: final loss function} with an additional  trainable  parameter $\kappa$ in the residual part, which has the following form.
    \begin{equation}\label{Inverse Loss function}
        J(\tilde{\bmTheta}, \kappa) =\frac{w_{\mathcal{F}}}{50000}\sum_{i=1}^{50000}\left| \mathcal{F}_{\tilde{\bmTheta},\kappa}(\bm{x}_f^i) \right|^2 + \frac{w_u}{5000}\sum_{i=1}^{5000}\left| \bar{u}^i(\bm{x}_u^i) - u_{\tilde{\bmTheta}}(\bm{x}_u^i)\right|^2,
    \end{equation}
where $\mathcal{F}_{\tilde{\bmTheta},\kappa}(\bm{x}_f^i) = \frac{\partial u_{\tilde{\bmTheta}}}{\partial t}|_{\bm{x}_f^i} - \kappa \frac{\partial^2 u_{\tilde{\bmTheta}}}{\partial x^2}|_{\bm{x}_f^i}$,  $w_u=w_\mathcal{F}=1$, and a set of 50,000 points are used as residual points to ensure the PDE is satisfied. This leads to training $\kappa$ along with the neural network parameters (including those of activation functions). The neural network architecture used for this numerical study consists of ten hidden layers with 50 neurons in each layer.   The $\beta_k^i$s are initialized as 1 for the proposed activation function.  The $\kappa$ value is initialized as 0.

\begin{figure}[!htb]
\centering
\resizebox*{8cm}{!}{\includegraphics{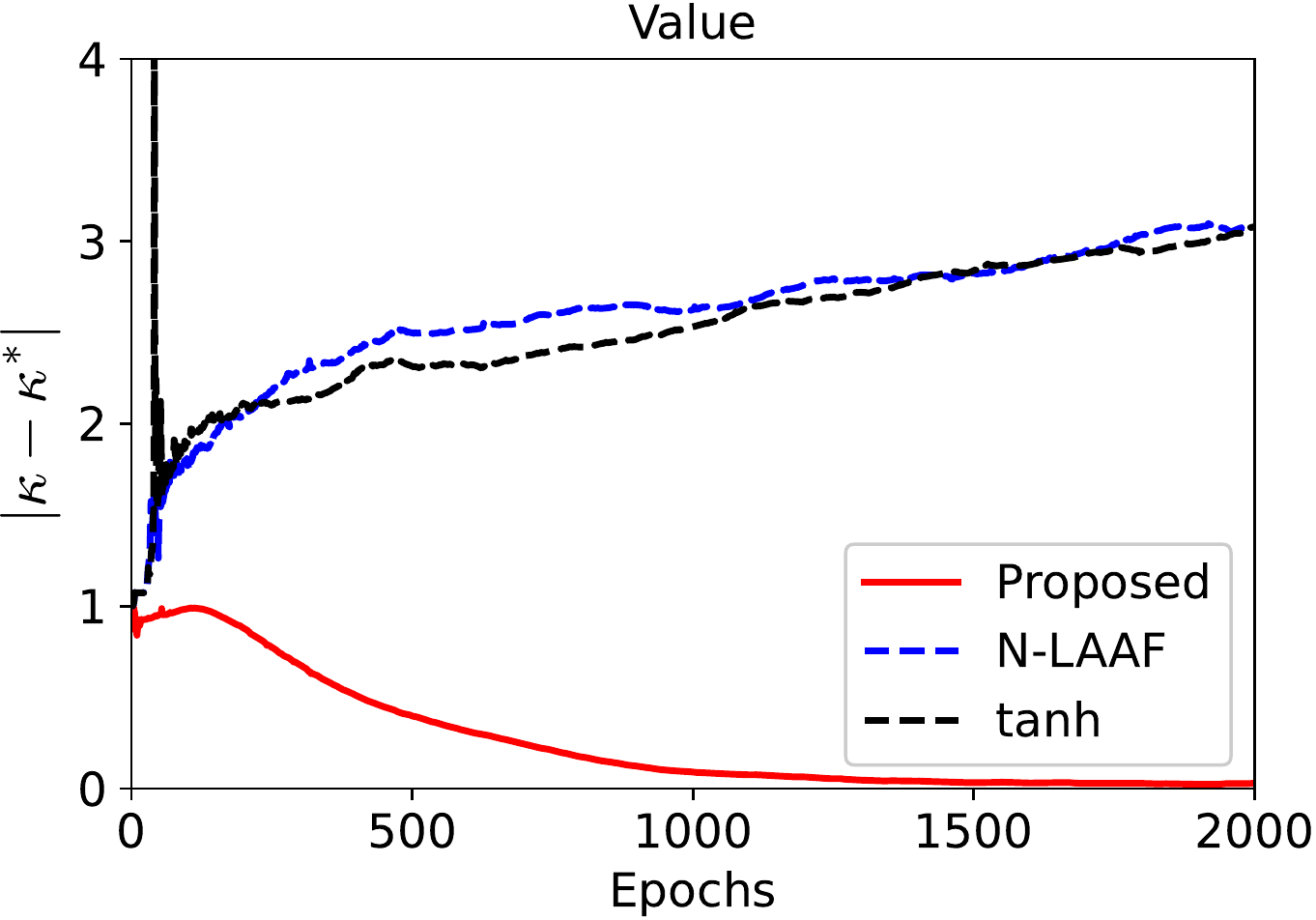}} \vspace{-0.5cm}
\caption{Convergence of $\left|\kappa-\kappa^*\right|$ of the Transient 1D Heat Transfer equation.} \label{fig: performance of kappa in Transient 1D heat Transfer}
\end{figure}
\begin{figure}[!htbp]\vspace{-1.5cm}
\centering
\subfloat[Empirical convergence of training loss]{%
\resizebox*{6.8cm}{!}{\includegraphics{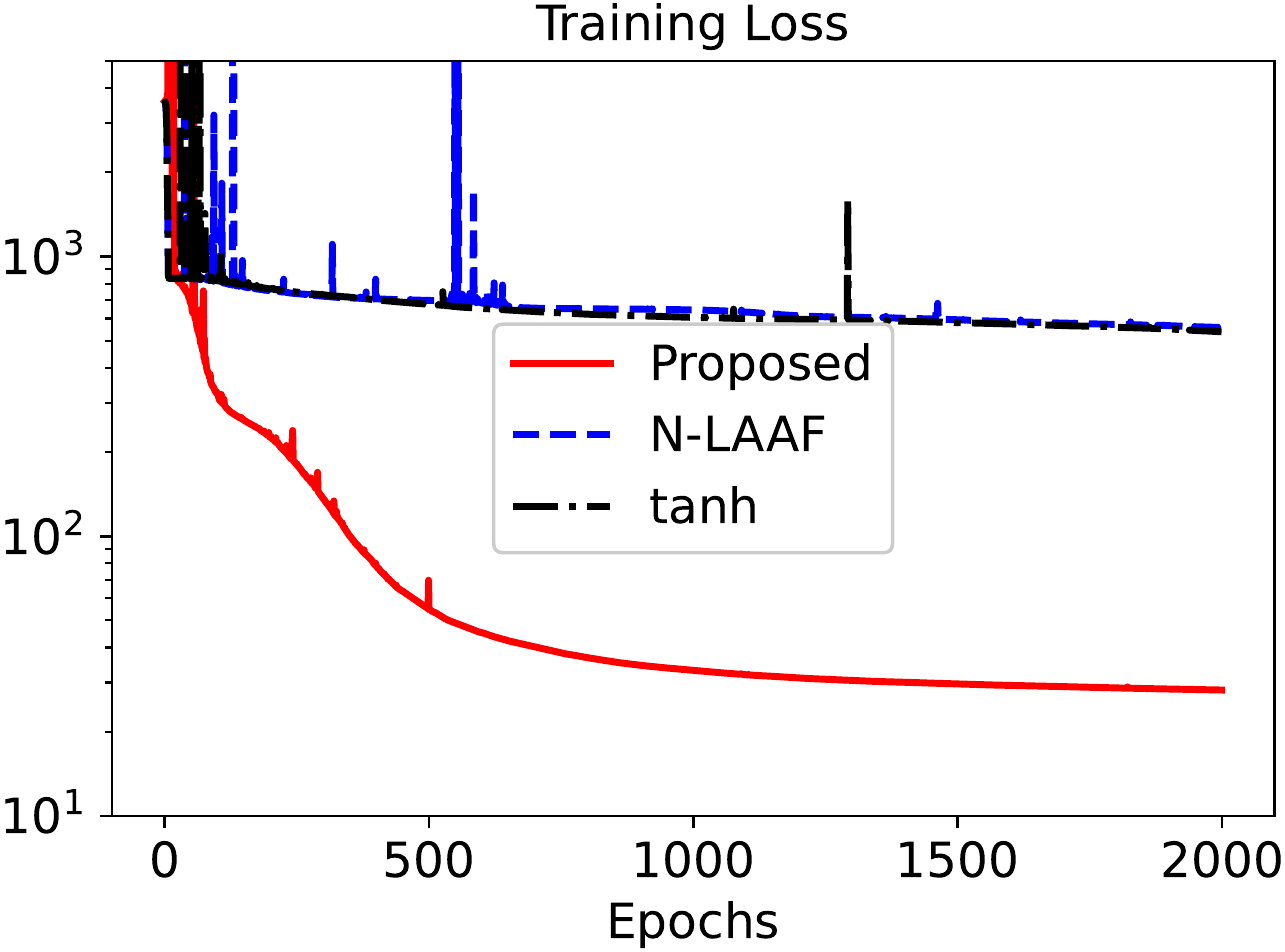}} \label{subfig: Transient 1D heat Transfer-training loss} }\hspace{5pt}
\subfloat[Empirical convergence of $\beta^i_k$ in one of the neurons]{%
\resizebox*{7.5cm}{!}{\includegraphics{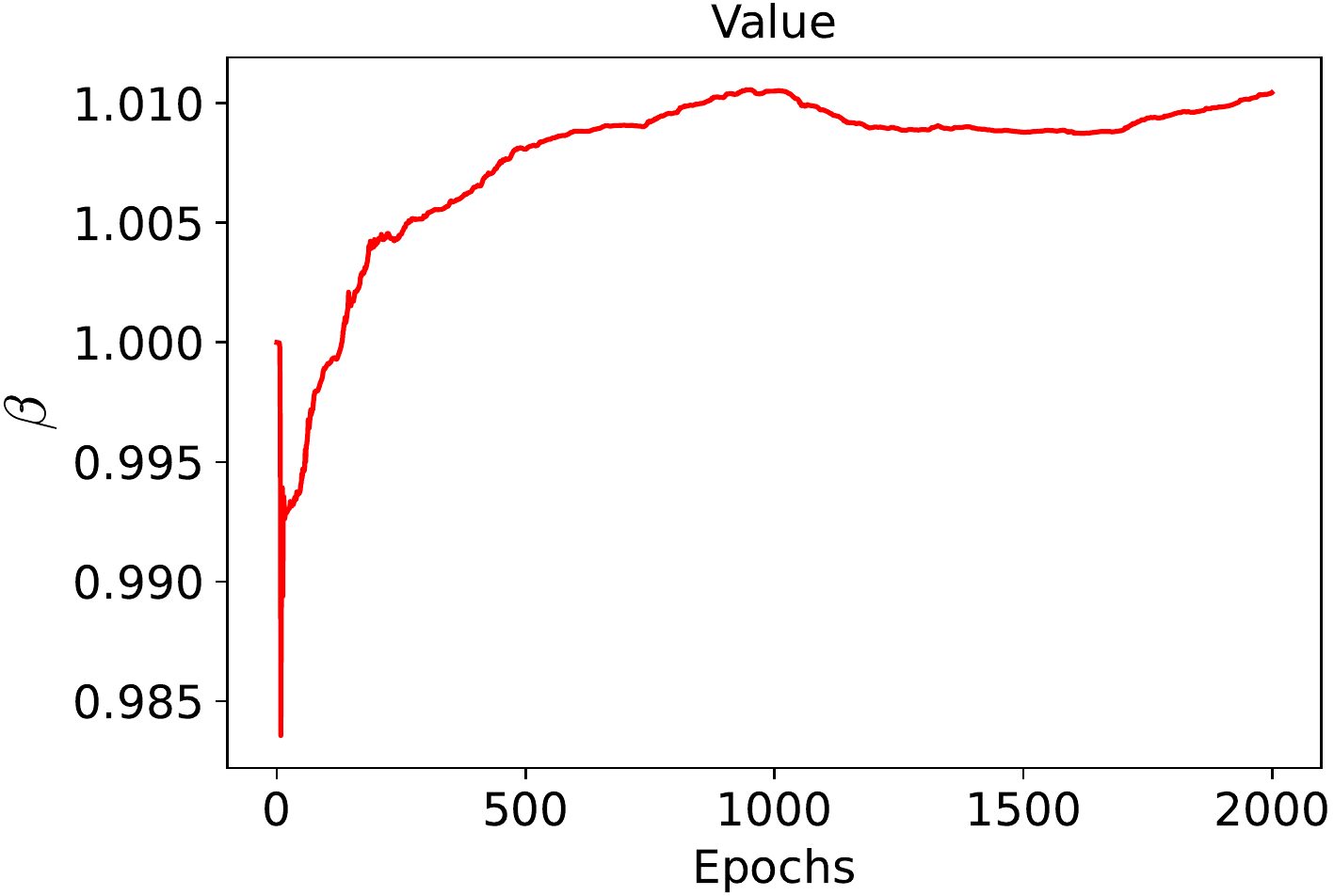}}  \label{subfig: Transient 1D heat Transfer-beta} }\\
\subfloat[Neural network solutions of the proposed method, N-LAAF, and tanh]{%
\resizebox*{15cm}{!}{\includegraphics{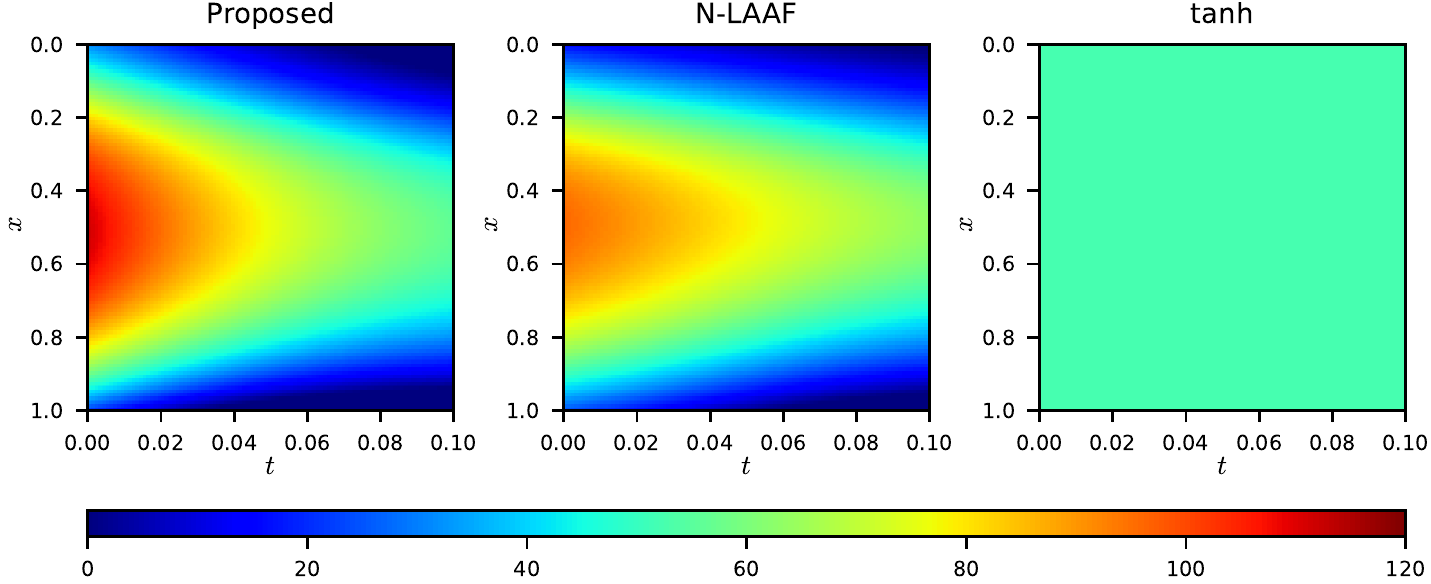}} \label{subfig: Transient 1D heat Transfer-solution} }\\
\subfloat[Absolute difference maps for neural network solutions of the proposed method, N-LAAF, and tanh when compared to the approximate closed form solution]{%
\resizebox*{15cm}{!}{\includegraphics{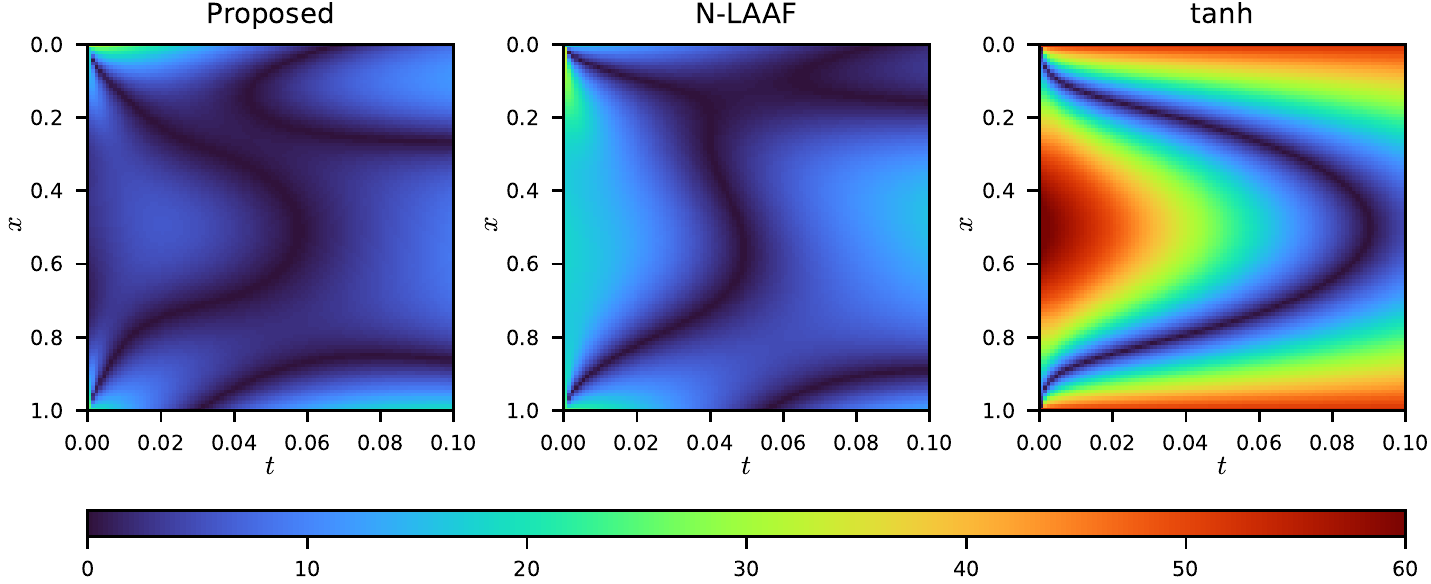}} \label{subfig: Transient 1D heat Transfer-errorMap} }  \caption{Results of  the Transient 1D heat Transfer equation: (a) Empirical convergence of training loss; (b) Empirical convergence of $\beta^i_k$; (c) Neural network solutions of the proposed method, N-LAAF, and tanh; (d) Neural network solution of N-LAAF; (e) Absolute error maps for neural network solutions of the proposed method, N-LAAF, and tanh.} \label{fig: performance of Transient 1D heat Transfer}
\end{figure}

Figure \ref{fig: performance of kappa in Transient 1D heat Transfer} plots the convergence of  absolute error $\left|\kappa-\kappa^*\right|$. Our method can converge to zero, which means that our approach can accurately identify the $\kappa^*$.  Interestingly, the benchmark methods output a negative value of $\kappa$, which is physically meaningless, and they diverge away from the true $\kappa^*$ value.  As a consequence of the absurd $\kappa$ values, the training losses of the benchmark methods hardly converge, whereas the proposed method reduces the loss function to lower orders of magnitude, as shown in Figure \ref{subfig: Transient 1D heat Transfer-training loss}.  Figure~\ref{subfig: Transient 1D heat Transfer-beta} shows the convergence behavior of $\beta^i_k$ in one of the neurons. Figures \ref{subfig: Transient 1D heat Transfer-solution} and \ref{subfig: Transient 1D heat Transfer-errorMap} correspond to the neural network solutions and absolute error maps for different methods. Our method can obtain the most accurate solution, while the solution from PINN with N-LAAF is close to ours. The reason for poor performance of the tanh function in this case can be attributed to both the vanishing gradient and the higher order of magnitude of output values.

\begin{table}[htbp!] 
\centering
\caption{Test performance of  forward and inverse problems for different methods in terms of MSE and RE.}\vspace{+0.4cm}
\begin{tabular}{lrrrrrr} 
\toprule
              & \multicolumn{2}{c}{tanh}                         & \multicolumn{2}{c}{N-LAAF}                     & \multicolumn{2}{c}{Proposed}                      \\ 
\cmidrule(lr){2-3}\cmidrule(lr){4-5}\cmidrule(lr){6-7}
              & \multicolumn{1}{c}{MSE} & \multicolumn{1}{c}{RE} & \multicolumn{1}{c}{MSE} & \multicolumn{1}{c}{RE} & \multicolumn{1}{c}{MSE} & \multicolumn{1}{c}{RE}  \\ 
\midrule
Eq.~\eqref{eq: 1D_1 differential equation}         & 295.10                  & 0.8757                 & 160.05                  & 0.6449                 & \textbf{0.00}                    & \textbf{0.0005}                  \\ 
\midrule
Eq.~\eqref{eq: 1D_2 differential equation}         & 0.40                    &        0.0088       & 0.21                    &        	0.0063       & \textbf{0.00}                    &      	\textbf{0.0000}             \\ 
\midrule
Eq.~\eqref{eq: 2D_1 differential equation}         & 2.61                    & 0.7725                 & 1.54                    & 0.5936                 & \textbf{0.55}                    & \textbf{0.3541}                  \\ 
\midrule
Eq.~\eqref{eq: heat transfer equation}      & 544.29                  & 0.3927                 & 564.08                  & 0.3998                 & \textbf{30.36}                   & \textbf{0.0928}                 \\
\bottomrule
\end{tabular} \label{tab: test loss summary 2}
\end{table}
The test performance  for different methods in terms of MSE and RE is summarized in Table~\ref{tab: test loss summary 2}.  The proposed method has the best performance in terms of MSE and RE. Based on the results in this Section, our proposed method can achieve the fastest training convergence and the best test performance for both forward and inverse problems using PINN.

\section{Conclusion}\label{sec: conclusion}
In this work, a Self-scalable tanh (Stan) is proposed for PINN, which contains a trainable parameter. The trainable parameter can be easily optimized together with NN weights and biases using gradient descent algorithms. It is shown theoretically that the proposed Stan with PINN has no spurious stationary points, which is also empirically verified in all case studies.  Our proposed Stan performs well on all types of case studies, including regression problems and a range of forward problems and an inverse problem. Especially, it is remarkably superior when the scales of the output are in higher orders of magnitude. Our proposed Stan can not only achieve faster convergence in  training but also show better generalization in prediction, compared to state-of-the-art activation functions. In the inverse problem, it is shown to be significantly useful for identifying the thermal diffusivity with the simulated sensor data.   

Though Stan improved the PINN to solve a wider range of problems, there are still some aspects of Stan that deserve further investigation. First, the  theoretical properties of the Stan function should be examined, particularly related to the gradient flow in PINN. Second, the initialization of the trainable parameters $\beta^i_k$'s needs additional research. The $\beta^i_k$'s were initialized with constant values for all the neurons in this work. The third aspect which needs development is the optimization process, particularly for the characteristic of the proposed Stan activation function.




	\appendix

\section{Proof of Theorem \ref{thm: no spurious}} \label{appendix: proofs}
\begin{proof} 
The above statements can be proved by contradiction. For simplicity, fix $w_{\mathcal{F}} = w_u = 1$. Suppose that the parameter vector $\tilde{\bmTheta}=\{\bm{W}^k,\bm{b}^k\}_{k=1}^D\bigcup\{\bm{\beta}^k\}_{k=1}^{D-1}$ is a limit point of $\{\tilde{\bmTheta}_m\}_{m\geq 0}$ and a  suboptimal stationary point. 

Let $\ell_f^j\coloneqq \phi^j(u_{\tilde{\bmTheta}}(\rho^j))$ and $\ell_u^j \coloneqq \left|u^j - u_{\tilde{\bmTheta}}(\bm{x}_u^j)\right|$. Denote $\bm{z}^{j,k}_f$ and $\bm{z}^{j,k}_u$ as the outputs of the $k$th layer for $\rho^j$ and $\bm{x}_u^j$, respectively. Now define
\begin{equation*}
    h_f^{i,j,k} \coloneqq \beta^i_k(\bm{W}^k_{i\cdot}\bm{z}^{j,k-1}_f+\bm{b}^{i,k}) \in \mathbb{R}
\end{equation*}
and 
\begin{equation*}
    h_u^{i,j,k} \coloneqq \beta^i_k(\bm{W}^k_{i\cdot}\bm{z}^{j,k-1}_u+\bm{b}^{i,k}) \in \mathbb{R},
\end{equation*}
for all $i=1,\dots,N_k$, where $\bm{W}^k_{i\cdot}\in\mathbb{R}^{1\times N_k}$ denotes $i$th row in $\bm{W}^k$ and $\bm{b}^{i,k}\in \mathbb{R}$.

Based on the proofs in~\citep[Propositions 1.2.1-1.2.4]{bertsekas1997nonlinear},  we have that  $\nabla J_{\gamma}(\tilde{\bmTheta})=\bm{0}$ and $J_{\gamma}(\tilde{\bmTheta})<J_{\gamma}(\bm{0})$. Since  $\nabla J_{\gamma}(\tilde{\bmTheta})=\bm{0}$, for all $k=1,\dots,D$ and all $i=1,\dots,N_k$, we have 
\begin{align}
        \frac{\partial J_{\gamma}(\tilde{\bmTheta})}{\partial \bm{W}^k_{i\cdot}} & =\frac{\beta_i^k}{N_f}\sum_{j=1}^{N_f}\frac{\partial \ell_f^j}{h_f^{i,j,k}}(\bm{z}^{j,k-1}_f)^\top + \frac{\beta_i^k}{N_u}\sum_{j=1}^{N_u}\frac{\partial\ell_u^j}{h_u^{i,j,k}}(\bm{z}^{j,k-1}_u)^\top =\bm{0} \label{eq: gradient W}\\
        \frac{\partial J_{\gamma}(\tilde{\bmTheta})}{\partial \bm{b}^{i,k}} & = \frac{\beta_i^k}{N_f}\sum_{j=1}^{N_f}\frac{\partial \ell_f^j}{h_f^{i,j,k}} + \frac{\beta_i^k}{N_u}\sum_{j=1}^{N_u}\frac{\partial\ell_u^j}{h_u^{i,j,k}} =0 \label{eq: gradient b}
\end{align}
In addition,  for all $k=1,\dots,D-1$ and all $i=1,\dots,N_k$, we have 
\begin{equation}\label{eq: gradient beta}
\begin{aligned}
         \frac{\partial J_{\gamma}(\tilde{\bmTheta})}{\partial \beta^i_k} &= \frac{1}{N_f}\sum_{j=1}^{N_f}\frac{\partial \ell_f^j}{h_f^{i,j,k}}(\bm{W}^k_{i\cdot}\bm{z}^{j,k-1}_f+\bm{b}^{i,k})+\frac{1}{N_u}\sum_{j=1}^{N_u}\frac{\partial \ell_u^j}{h_u^{i,j,k}}(\bm{W}^k_{i\cdot}\bm{z}^{j,k-1}_u+\bm{b}^{i,k}) + 2\gamma \beta^i_k \\
         & = 0.
\end{aligned}
\end{equation}
Multiply Eq.~\eqref{eq: gradient beta} by $\beta^i_k$, we have 
\begin{align}
   0= & \beta^i_k\frac{\partial J_{\gamma}(\tilde{\bmTheta})}{\partial \beta^i_k} \nonumber \\
   = &  \frac{\beta^i_k}{N_f}\sum_{j=1}^{N_f}\frac{\partial \ell_f^i}{h_f^{i,j,k}}(\bm{W}^k_{i\cdot}\bm{z}^{j,k-1}_f+\bm{b}^{i,k})+\frac{ \beta^i_k}{N_u}\sum_{j=1}^{N_u}\frac{\partial \ell_f^u}{h_u^{i,j,k}}(\bm{W}^k_{i\cdot}\bm{z}^{j,k-1}_u+\bm{b}^{i,k}) + 2\gamma (\beta^i_k)^2 \nonumber\\
   = &  \bm{W}^k_{i\cdot}(\frac{\partial J_{\gamma}(\tilde{\bmTheta})}{\partial \bm{W}^k_{i\cdot}})^{\top} + \bm{b}^{i,k} \frac{\partial J_{\gamma}(\tilde{\bmTheta})}{\partial \bm{b}^{i,k}} + 2\gamma (\beta^i_k)^2 \label{eq: massage}\\
   = & 2\gamma (\beta^i_k)^2, \nonumber
\end{align}
where Eq.~\eqref{eq: massage} comes from Eqs.~\eqref{eq: gradient W} and~\eqref{eq: gradient b}. It  implies that for all $\beta^i_k=0$ for all $i,k$ since $\gamma>0$. This shows $J_{\gamma}(\tilde{\bmTheta})=J_{\gamma}(\bm{0})$, which contradicts with $J_{\gamma}(\tilde{\bmTheta})<J_{\gamma}(\bm{0})$. 
\end{proof}

\bibliographystyle{apalike}
\spacingset{1}
\bibliography{IISETrans}

\end{document}